\documentclass[11pt, letterpaper]{article}
\usepackage{arxiv}

\usepackage[utf8]{inputenc} 
\usepackage[T1]{fontenc}    
\usepackage{hyperref}       
\usepackage{url}            
\usepackage{booktabs}       
\usepackage{amsfonts}       
\usepackage{nicefrac}       
\usepackage{microtype}      
\usepackage{lipsum}		
\usepackage{graphicx}
\usepackage{natbib}
\usepackage{doi}

\usepackage[utf8]{inputenc} 
\usepackage[T1]{fontenc}    
\usepackage{hyperref}       
\usepackage{url}            
\usepackage{booktabs}       
\usepackage{amsfonts}       
\usepackage{nicefrac}       
\usepackage{microtype}      
\usepackage{xcolor}         

\usepackage{amsmath}
\usepackage{amssymb}
\usepackage{mathtools}
\usepackage{amsthm}

\usepackage{enumitem}
\usepackage[capitalize,noabbrev]{cleveref}
\usepackage{multirow}
\usepackage{graphicx}
\usepackage{subcaption}
\usepackage{wrapfig}

\theoremstyle{plain}
\newtheorem{theorem}{Theorem}[section]
\newtheorem{lemma}[theorem]{Lemma}
\theoremstyle{definition}

\theoremstyle{remark}

\title{On the Memorization of Consistency Distillation for Diffusion Models}

\author{
Bingqing Jiang\thanks{School of Computing \& Data Science, The University of Hong Kong. Email: bingqingjiang@connect.hku.hk}
\and
Difan Zou\thanks{School of Computing \& Data Science and Institute of Data Science, The University of Hong Kong. Email: dzou@hku.hk}
}

\begin{document}
\maketitle

\begin{abstract} 
Diffusion models are central to modern generative modeling, and understanding how they balance memorization and generalization is critical for reliable deployment. 
Recent work has shown that memorization in diffusion models is shaped by training dynamics, with generalization and memorization emerging at different stages of training.
However, deployed diffusion models are often further distilled, introducing an additional training phase whose impact on memorization is not well understood.
In this work, we analyze how distillation reshapes memorization behavior in diffusion models, taking consistency distillation as a representative framework. Empirically, we show that when applied to a teacher model that has memorized data, consistency distillation significantly reduces transferred memorization in the student while preserving, and sometimes improving, sample quality. To explain this behavior, we provide a theoretical analysis using a random feature neural network model \citep{bonnaire2025why}, showing that consistency distillation suppresses unstable feature directions associated with memorization while preserving stable, generalizable modes. Our findings suggest that distillation can serve not only as an acceleration tool, but also as a mechanism for improving the memorization--generalization trade-off.
\end{abstract}

\section{Introduction}

Diffusion models have become a central paradigm in modern generative modeling due to their strong empirical performance, stable training dynamics, and flexibility across data modalities~\citep{song2019generative,song2020improved,song2021scorebased,ho2020denoising,karras2022elucidating,song2021denoising}.
By modeling generation as a gradual denoising process, diffusion models achieve high sample fidelity and robust generalization, making them a cornerstone of modern generative systems~\citep{podell2024sdxl}.
Given their growing importance, understanding how diffusion models balance memorization and generalization has become a fundamental question~\citep{gu2025on,wen2024detecting,jeon2024understanding,li2023generalization,somepalli2023understanding}.
Recent studies show that memorization in diffusion models is governed by training
dynamics~\citep{bonnaire2025why,george2025denoising,pham2025memorization}.
In particular, models typically achieve high generative quality before
memorization emerges at later training stages, indicating that memorization is a
dynamic, time-dependent phenomenon.
This view is also supported by evidence that trained denoising scores are
smoother than closed-form empirical optima, which helps explain why practical
diffusion models can generalize instead of simply reproducing training
samples~\citep{wang2024discrepancy}.

However, existing analyses of memorization have largely focused on diffusion
models trained from scratch.
This setting does not fully capture modern deployment pipelines, where a
pretrained diffusion model is often further distilled to improve sampling
efficiency and reduce computational cost~\citep{pmlr-v202-song23a,salimans2022progressive,kim2024consistency,luo2023diff,geng2025consistency}.
This raises a critical question that has received little attention to date:
\begin{center}
\centering
\emph{How does distillation affect the memorization properties of diffusion models?}
\end{center}
At a high level, distillation is not merely a passive model compression step, but a new training process with its own objective, data distribution, and optimization dynamics~\citep{xiang2025dkdm}.
If memorization in diffusion models depends sensitively on training dynamics and time scales, as prior work suggests, then distillation may further reshape, suppress, or even amplify memorization inherited from the teacher.
Yet, the effect of distillation on memorization remains largely uncharacterized.


Recent progress toward few-step or even single-step diffusion generation has largely relied on distillation-based approaches~\citep{tee2024physics,luo2024one,geng2023one,yin2024one,starodubcev2025scale}, among which consistency models have emerged as a representative and widely adopted framework~\citep{pmlr-v202-song23a}.
In this work, we study this question in the context of consistency distillation.
Rather than explicitly supervising entire sampling trajectories or relying on large collections of teacher-generated samples~\citep{yin2024improved,park2025inference}, consistency-based methods train a student model via an additional optimization procedure that enforces local agreement between neighboring points along the probability flow ODE.
From the perspective of learning dynamics, consistency distillation introduces a nontrivial training phase beyond the original diffusion training.
This additional optimization stage operates under a distinct objective and data distribution, and can therefore alter the balance between memorization and generalization established during the teacher's training.
The main contributions can be summarized as follows:
\begin{itemize}[leftmargin=*, itemsep=1pt, topsep=1pt]
\item \textbf{Consistency distillation mitigates memorization.}    
    We show that consistency distillation can reduce memorization inherited by the student model, even when the teacher exhibits strong overfitting. This effect holds across a wide range of settings, demonstrating that distillation actively reshapes memorization behavior rather than passively inheriting it.
\item \textbf{Distillation can preserve or improve utility.}
    Beyond reducing memorization, we find that consistency distillation can also improve sample quality.
    When the teacher model operates in a moderate memorization regime, the distilled
    student can even surpass the teacher in generative performance, indicating that memorization reduction does not come at the expense of utility.
\item \textbf{A mechanism for reshaped training dynamics.}
We provide a mechanistic understanding of consistency distillation using a
tractable Random Feature Neural Network (RFNN) model.
Our analysis shows that consistency distillation reshapes training dynamics by
concentrating updates on statistically stable feature directions, while rendering
memorization-associated modes dynamically negligible.
This structured update dynamics preserves generalizable representations and explains the empirical reduction of memorization under consistency distillation.
\end{itemize}

\vspace{-3mm}
\section{Related Work}
\vspace{-2mm}

\paragraph{Memorization in Diffusion Models.}
Recent studies have investigated memorization in diffusion models~\citep{291199,wen2024detecting,zhang2025generalization}.
A key motivation is that the denoising score matching objective admits empirical
minimizers that reproduce training samples, implying that memorization is
theoretically expected in weakly regularized or small-data regimes~\citep{gu2025on,baptista2025memorization}.
Subsequent work shows that memorization and generalization undergo sharp
transitions as a function of dataset size, model capacity, and training dynamics,
including phase-transition and crossover phenomena~\citep{buchanan2025on,zeno2025when,pham2025memorization}.
A precise high-dimensional analysis is given by~\cite{george2025denoising},
who derive exact learning curves for diffusion models with random-feature
parameterizations.
Building on this framework,~\cite{bonnaire2025why} show that
diffusion training dynamics induce an implicit form of dynamical regularization,
creating a growing time window between the onset of generalization
and the emergence of memorization in overparameterized regimes.
Beyond direct duplication, recent extraction studies show that explicit or
surrogate conditioning can amplify memorization risks in diffusion
models~\citep{chen2025side}.
While prior work has primarily focused on diffusion models trained from scratch,
the memorization behavior of distilled diffusion models remains underexplored.
In this work, we analyze how the additional training stage introduced by
consistency distillation reshapes memorization relative to standard diffusion
training and provide a theoretical explanation for the observed behavior.

\paragraph{Consistency Distillation.}

Consistency distillation accelerates diffusion model sampling by enforcing
self-consistency across diffusion times, enabling efficient few-step generation
via distillation from pretrained diffusion models~\citep{pmlr-v202-song23a,lai2023on}.
Subsequent work extends this framework to improve quality--speed trade-offs,
stabilize training, and provide theoretical guarantees on estimation,
discretization, and convergence~\citep{kim2024consistency,NEURIPS2024_98a29475,
wang2025stable,dou2024theory,yang2025improved,chen2025convergence}.
However, these studies primarily emphasize efficiency and sample quality.
The effect of consistency distillation on memorization remains largely
unexplored, which is the focus of our study.

\section{Definitions and Preliminaries}

\subsection{Generative Score Matching}
\label{subsec:ddm}

Diffusion models define a generative mechanism by gradually
transforming data drawn from an unknown target distribution $P_0$ on
$\mathbb R^d$ into Gaussian noise through a continuous-time stochastic process.
A standard formulation uses the Ornstein--Uhlenbeck (OU) stochastic differential
equation
\begin{equation}
\mathrm d \mathbf{x}_t
=
- \mathbf{x}_t \,\mathrm dt
+ \sqrt{2}\,\mathrm d \mathbf{B}_t ,
\label{eq:ou_forward}
\end{equation}
where $\mathbf{B}_t$ denotes a standard Wiener process.
This forward diffusion induces a family of intermediate distributions
$\{P_t\}_{t\ge 0}$ that smoothly interpolate between $P_0$ and the standard
Gaussian distribution $\mathcal N(0,I_d)$ as $t \to \infty$.
The closed-form solution of (\ref{eq:ou_forward}) yields
\begin{equation}
\mathbf{x}_t
=
e^{-t} \mathbf{x}_0
+ \sqrt{\Delta_t}\,\boldsymbol{\xi},
\quad
\Delta_t = 1 - e^{-2t},
\label{eq:ou_solution}
\end{equation}
with $\boldsymbol{\xi} \sim \mathcal N(0,I_d)$ independent of $\mathbf{x}_0$.
Sampling from the target distribution is achieved by reversing the forward
process in time, which takes the form
\begin{equation}
-\mathrm d \mathbf{x}_t
=
\bigl[
\mathbf{x}_t + 2\nabla_{\mathbf{x}} \log P_t(\mathbf{x})
\bigr]\mathrm dt
+ \sqrt{2}\,\mathrm d \widetilde{\mathbf{B}}_t ,
\label{eq:reverse_sde}
\end{equation}
where $\widetilde{\mathbf{B}}_t$ is a Wiener process evolving backward in time, and
$\nabla_{\mathbf{x}} \log P_t(\mathbf{x})$
is the score function of the forward marginal at time $t$.
Then generation reduces to learning the score
$\nabla_{\mathbf{x}} \log P_t(\mathbf{x})$ for all relevant diffusion times.

The score function can be characterized as the minimizer of a denoising
score-matching objective.
In practice, the score is restricted to a parametrized family
$\{\mathbf{s}_\theta(\cdot,t)\}_{\theta}$, typically implemented by a neural network, and
the expectation over $P_0$ is replaced by an empirical average over a finite
training set $\{\mathbf{x}_\nu\}_{\nu=1}^n$~\citep{bonnaire2025why,vincent2011connection,hyvarinen2005estimation}:
\begin{equation}
\mathcal L_t(\theta)
=
\frac{1}{n}
\sum_{\nu=1}^n
\mathbb E_{\boldsymbol{\xi}\sim\mathcal N(0,I_d)}
\Bigl[
\bigl\|
\sqrt{\Delta_t}\, \mathbf{s}_\theta(\mathbf{x}_{\nu,t},t)
+ \boldsymbol{\xi}
\bigr\|_2^2
\Bigr],
\label{eq:score_matching_emp}
\end{equation}
where
$
\mathbf{x}_{\nu,t}
=
e^{-t}\mathbf{x}_\nu + \sqrt{\Delta_t}\boldsymbol{\xi}.
$

\subsection{Consistency Distillation}
\label{subsec:ou_forward_cd_objective}

Consistency distillation trains fast generative models by transferring
the local probability-flow dynamics of a pretrained diffusion model
into a time-consistent student mapping~\citep{pmlr-v202-song23a}.
The student is trained to produce consistent predictions along short segments
of the teacher-induced flow.
We adopt consistency distillation as our focus, and further rationale is given in
Appendix~\ref{rational_cd}.

Recall that the diffusion forward process admits an equivalent probability flow ODE
$
\frac{\mathrm d \mathbf{x}_t}{\mathrm d t}
=
h(t)\,\mathbf{x}_t
-
\frac{1}{2} g^2(t)\,\nabla_{\mathbf{x}} \log p_t(\mathbf{x}_t)$,
which generates the same marginal distributions $\{P_t\}$ as the forward SDE.
Given a pretrained teacher score model
$\mathbf{s}_\phi(\mathbf{x},t)\approx\nabla_{\mathbf{x}}\log p_t(\mathbf{x})$,
the PF-ODE induces a deterministic velocity field
$
\mathbf{v}_\phi(\mathbf{x},t)
=
h(t)\,\mathbf{x}
-
\frac{1}{2} g^2(t)\,\mathbf{s}_\phi(\mathbf{x},t)$.
Under the OU forward in~(\ref{eq:ou_forward}), the drift and diffusion coefficients are given by
$f(\mathbf{x},t)=-\mathbf{x}$ and $g(t)=\sqrt{2}$.
The associated probability flow ODE therefore admits the simplified form
\begin{equation}
\frac{\mathrm d \mathbf{x}_t}{\mathrm d t}
=
-\mathbf{x}_t
-\mathbf{s}_\phi(\mathbf{x}_t,t).
\label{eq:pf_ode_ou}
\end{equation}
In practice, the PF-ODE is discretized on a decreasing sequence of times
$T = t_0 > t_1 > \cdots > t_K = 0$.
Given a sample $\mathbf{x}_{t_{k+1}}$ at time $t_{k+1}$,
we adopt an explicit Euler ODE solver and obtain
the following single-step update:
\begin{equation}
\widehat{\mathbf{x}}^\phi_{t_k}
 = 
\mathbf{x}_{t_{k+1}}
 + 
(t_k - t_{k+1})
\Big(
 - \mathbf{x}_{t_{k+1}}
 - 
\mathbf{s}_\phi(\mathbf{x}_{t_{k+1}},t_{k+1}) 
\Big).
\label{eq:pf_ou_euler}
\end{equation}
We refer to $\widehat{\mathbf{x}}^\phi_{t_k}$ as the
\emph{teacher-induced one-step target}.
Let $f_{\boldsymbol{\theta}}(\mathbf{x},t)$ denote a student consistency model
that maps a noisy input $\mathbf{x}$ at time $t$
to a common representation, typically corresponding to an estimate of the clean data.
The objective of consistency distillation enforces
\emph{time consistency} across neighboring discretization points:
\begin{equation}
L_{\mathrm{CD}}(\boldsymbol{\theta})
=
\mathbb E_{k,\mathbf{x}_{t_{k+1}}}  
\left[ 
\left\|
f_\theta \big(\mathbf{x}_{t_{k+1}}, t_{k+1}\big)
-
f_\theta \big(\widehat{\mathbf{x}}^\phi_{t_k}, t_k\big) 
\right\|_2^2 
\right].
\label{eq:cd_loss_general}
\end{equation}
This objective transfers the local dynamics of the teacher PF-ODE
to the student without requiring the student to explicitly approximate the score function.

\section{Memorization and Generation Quality in Consistency Distillation}
\label{sec:cifar10}

\subsection{Experimental Setup}
\label{subsec:exp_setup}

\paragraph{Datasets.}
We evaluate consistency distillation in three settings: unconditional generation on CIFAR-10~\citep{krizhevsky2009learning}, class-conditional generation on ImageNet~\citep{imagenet}, and text-to-image generation based on Stable Diffusion v1.5~\citep{sd1_5}.
For CIFAR-10, we consider multiple reduced-data regimes by uniformly subsampling
$n \in \{3000,4000,5000,6000\}$ training images.
For ImageNet, we study class-conditional generation on two reduced-data subsets with 5k, 7k and 10k training images.
For text-to-image generation, we adopt a memorization-oriented protocol adapted from prior work~\citep{somepalli2023understanding,wen2024detecting}.
Specifically, we fine-tune Stable Diffusion v1.5 on a dataset consisting of 200 image--prompt pairs, each repeated 10 times to induce memorization, together with additional COCO image--prompt pairs to preserve generalization.
The amount of added COCO data is set to 3$\times$, 4$\times$, or 5$\times$ the size of the original 200-pair subset.
The resulting fine-tuned model serves as the teacher for subsequent consistency distillation.
\textit{Following prior memorization studies, no data augmentation is applied throughout to avoid ambiguity in memorization assessment}~\citep{gu2025on}.

\paragraph{Training configuration.}
For CIFAR-10 and ImageNet datasets, all distillation experiments use a pre-trained EDM diffusion model as the teacher~\citep{karras2022elucidating}. Student models are initialized from the same pre-trained diffusion models and fine-tuned via consistency distillation~\citep{pmlr-v202-song23a}. We use LPIPS~\citep{zhang2018unreasonable} as the metric in the consistency loss, using Heun's second-order solver with 18 discretization steps on CIFAR-10 and 40 discretization steps on ImageNet. 
In both settings, the distilled student is evaluated with 1-step generation unless otherwise noted.
For Stable Diffusion v1.5, we distill from the fine-tuned memorizing teacher using latent consistency distillation with Huber loss. Following the official LCM implementation~\citep{luo2023latent}, teacher targets are constructed with a DDIM-based ODE solver on a 50-step DDIM discretization of the original diffusion process, and the resulting student is evaluated with 4-step sampling.

\paragraph{Evaluation metrics.}
We measure generation quality by FID on CIFAR-10 and ImageNet, and by CLIP score on Stable Diffusion v1.5. To quantify memorization, we report both the standard $\ell_2$-based memorization ratio~\citep{yoon2023memorization} and an SSCD-based semantic metric on CIFAR-10, where under the $\ell_2$ criterion a sample is considered memorized if its distance to the nearest training image is less than one-third of that to the second nearest neighbor. On ImageNet, memorization is measured only in SSCD feature space, since pixel-space matching is less reliable for complex natural images~\citep{wen2024detecting}. For both CIFAR-10 and ImageNet, we use an SSCD threshold of $0.6$ to define memorization ratios and additionally report the $95$th-percentile (p95) similarity to reduce sensitivity to the threshold choice. For Stable Diffusion v1.5, memorization is assessed by SSCD-based statistics, including mean, maximum, and p95 similarity over generated samples. Additional implementation details are provided in Appendix~\ref{appendix_training_details} and~\ref{appendix_sd15_details}.

\subsection{Memorization under Matched Generation Quality}
\label{sec:similar_quality_mem}

We first ask whether the student memorizes less simply because it generates worse samples.
To avoid this confound, we compare teacher--student pairs with comparable FID or CLIP score and then examine their memorization behavior.
Table~\ref{tab:cifar10_mem_fid} reports the results for unconditional image generation on CIFAR-10.
Across all data settings, teacher and student achieve comparable FID, making the memorization comparison meaningful.
Under this matched-quality regime, the student consistently shows substantially lower memorization under both $l_2$- and SSCD-based criteria.
The upper tail of SSCD similarity is also markedly reduced, indicating that consistency distillation suppresses not only overall memorization but also the most severe near-duplicate generations.
Moving to class-conditional image generation, Table~\ref{tab:imagenet_mem_fid} shows the corresponding results on ImageNet.
Here, the student likewise remains close to the teacher in FID while exhibiting a much lower SSCD-based memorization ratio, showing that the reduction is not limited to the CIFAR-10 setting.
Finally, Table~\ref{tab:clip_sscd_extra} presents the Stable Diffusion v1.5 results under different extra-data settings.
In this text-to-image setting, teacher and student remain broadly comparable in CLIP score, while the student consistently reduces SSCD mean, maximum similarity, and upper-tail similarity.
This indicates that consistency distillation suppresses memorization both in overall similarity to training samples and in the extreme high-similarity cases most indicative of direct recall. 
Additional results on sampler effects, model capacity, and student--teacher architectural mismatch are provided in Appendix~\ref{Sampler_Effects},~\ref{sec:model_capacity}, and~\ref{sec:architecture_mismatch}, respectively.



\begin{table*}[t]
\footnotesize
\setlength{\tabcolsep}{4pt}
\centering
\begin{minipage}[t]{0.48\textwidth}
\centering

\captionof{table}{\textbf{Memorization and generation quality comparison on unconditional CIFAR-10 under different data settings}.}
\label{tab:cifar10_mem_fid}
\resizebox{\linewidth}{!}{%
\begin{tabular}{llccc}
\toprule
Setting & Model & FID & $l_2$ Mem & SSCD Mem / p95 \\
\midrule
\multirow{2}{*}{3000-data}
& Teacher & 21.78 & 4.88\% & 21.68\% / 0.8164 \\
& Student & 20.82 & 0.01\% & 2.83\% / 0.5623 \\
\midrule
\multirow{2}{*}{4000-data}
& Teacher & 21.52 & 3.73\% & 18.31\% / 0.7983 \\
& Student & 20.87 & 0.01\% & 0.67\% / 0.5128 \\
\midrule
\multirow{2}{*}{5000-data}
& Teacher & 23.82 & 5.94\% & 21.59\% / 0.8270 \\
& Student & 23.03 & 0.02\% & 1.66\% / 0.5342 \\
\midrule
\multirow{2}{*}{6000-data}
& Teacher & 24.57 & 4.35\% & 17.65\% / 0.8027 \\
& Student & 23.68 & 0.01\% & 0.37\% / 0.5010 \\
\bottomrule
\end{tabular}%
}

\end{minipage}
\hfill
\begin{minipage}[t]{0.48\textwidth}
\centering

\captionof{table}{\textbf{Memorization and generation quality comparison on class-conditional ImageNet under different data settings}.}
\label{tab:imagenet_mem_fid}
\begin{tabular}{llcc}
\toprule
Setting & Model & FID & SSCD Mem / p95 \\
\midrule
\multirow{2}{*}{5000-data}
& Teacher & 16.89 & 17.4\% / 0.7011 \\
& Student & 17.70 & 2.67\% / 0.5713 \\
\midrule
\multirow{2}{*}{7000-data}
& Teacher & 20.14 & 30.25\%/ 0.7309 \\
& Student & 28.65 & 2.65\% / 0.5552 \\
\midrule
\multirow{2}{*}{10000-data}
& Teacher & 27.38 & 18.57\% / 0.6972 \\
& Student & 21.56 & 1.68\% / 0.4852 \\
\bottomrule
\end{tabular}%
\end{minipage}
\end{table*}

\begin{table}[t]
\small
\centering
\caption{\textbf{Memorization and generation quality comparison on Stable Diffusion v1.5 under different extra-data settings}.}
\label{tab:clip_sscd_extra}
\begin{tabular}{llcccc}
\toprule
Setting & Model & CLIP & SSCD-mean & SSCD-max & SSCD-p95 \\
\midrule
\multirow{2}{*}{3$\times$ extra}
& Teacher & 0.2274 & 0.3612 & 0.6046 & 0.5451 \\
& Student & 0.2209 & 0.3380 & 0.5774 & 0.5064 \\
\midrule
\multirow{2}{*}{4$\times$ extra}
& Teacher & 0.2237 & 0.3390 & 0.5539 & 0.5109 \\
& Student & 0.2151 & 0.3002 & 0.5194 & 0.4795 \\
\midrule
\multirow{2}{*}{5$\times$ extra}
& Teacher & 0.2125 & 0.3003 & 0.6271 & 0.4764 \\
& Student & 0.2042 & 0.2685 & 0.5494 & 0.4204 \\
\bottomrule
\end{tabular}%
\end{table}

\subsection{Localized Memorization under Global Generalization}
\label{sec:localized_mem_global_gen}

We next evaluate consistency distillation in a more realistic regime where memorization is localized rather than global. 
In practice, memorization is often concentrated in a small subset of repeated or overexposed samples, while the model continues to generalize over the rest of the data distribution.
To reflect this structure, we construct localized memorization settings on CIFAR-10 and ImageNet by repeating only a small subset of classes and keeping the remaining classes non-repeated, thereby inducing global generalization with localized overexposure. 
Specifically, on CIFAR-10 we repeat 50 images from each of classes 0 and 1 five times and sample 1000 non-repeated images from classes 2--9, while on ImageNet we construct 50 repeated classes with 5 anchor images per class repeated 10 times and use 50 distinct non-repeated images for each of the remaining 950 classes.
As shown in Table~\ref{tab:realistic_localized_mem}, the teacher exhibits clear memorization on repeated classes but negligible memorization on non-repeated classes, confirming that memorization is indeed localized in this setting.
Under the same setup, the distilled student substantially reduces memorization on the repeated classes for both CIFAR-10 and ImageNet, while leaving the non-repeated classes essentially unchanged and maintaining comparable FID.
These results indicate that consistency distillation selectively suppresses localized memorization while preserving the broader generalized behavior of the model.

\begin{table}[t]
\centering
\caption{\textbf{Results under realistic localized memorization settings on CIFAR-10 and ImageNet}. Memorization statistics are reported as SSCD memorization ratio / p95 similarity to training data.}
\label{tab:realistic_localized_mem}
\small
\begin{tabular}{lccccc}
\toprule
Dataset & Model & FID & Repeated classes & Non-repeated classes & Overall \\
\midrule
\multirow{2}{*}{CIFAR-10}
& Teacher
& 12.02
& 74.54\% / 0.8983
& 0 / 0.4782
& 13.91\%/ 0.8255 \\
& Student
& 11.39
& 33.51\% / 0.7158
& 0 / 0.4747
& 5.69\%/0.6139 \\
\midrule
\multirow{2}{*}{ImageNet}
& Teacher
& 25.22
& 7.35\% / 0.6373
& 0 / 0.2308
& 0.37\% / 0.2497 \\
& Student
& 24.84
& 0 / 0.4118
& 0 / 0.2357
& 0 / 0.2416 \\
\bottomrule
\end{tabular}
\end{table}

\subsection{Student Training Dynamics during Consistency Distillation}
\label{subsec:moderate_mem}

\begin{figure*}[t]
  \centering
  \begin{subfigure}[t]{0.48\textwidth}
    \centering
    \includegraphics[width=\linewidth]{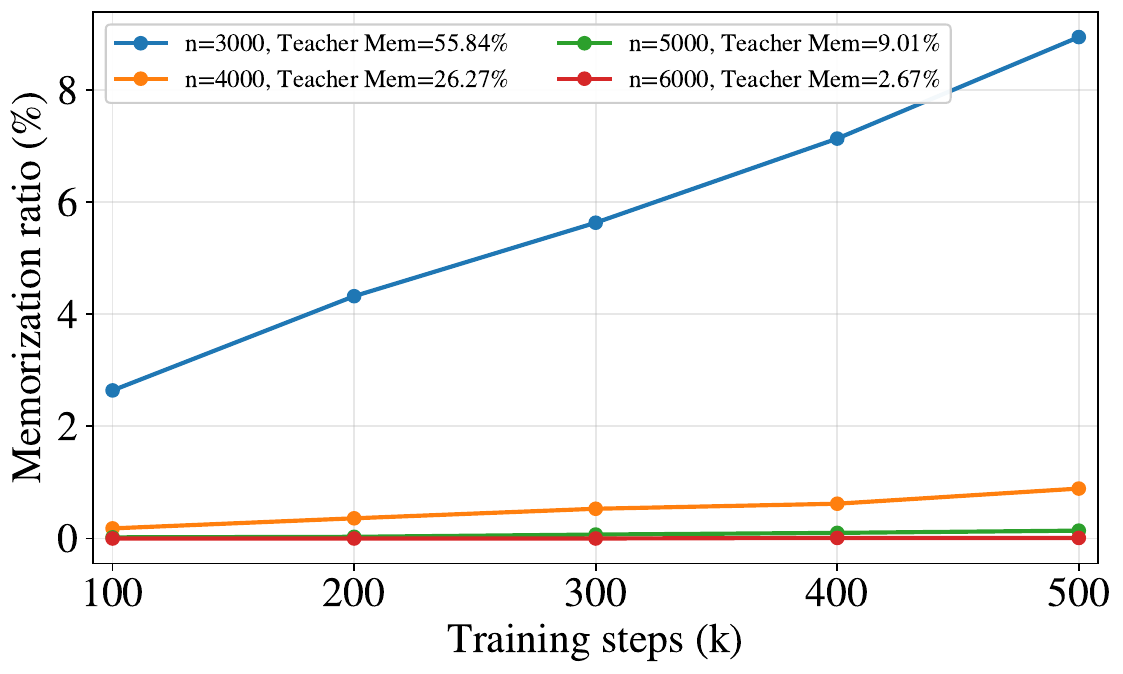}\vspace{-2mm}
    \label{fig:memorate_mem}
  \end{subfigure}
  \hfill
  \begin{subfigure}[t]{0.48\textwidth}
    \centering
    \includegraphics[width=\linewidth]{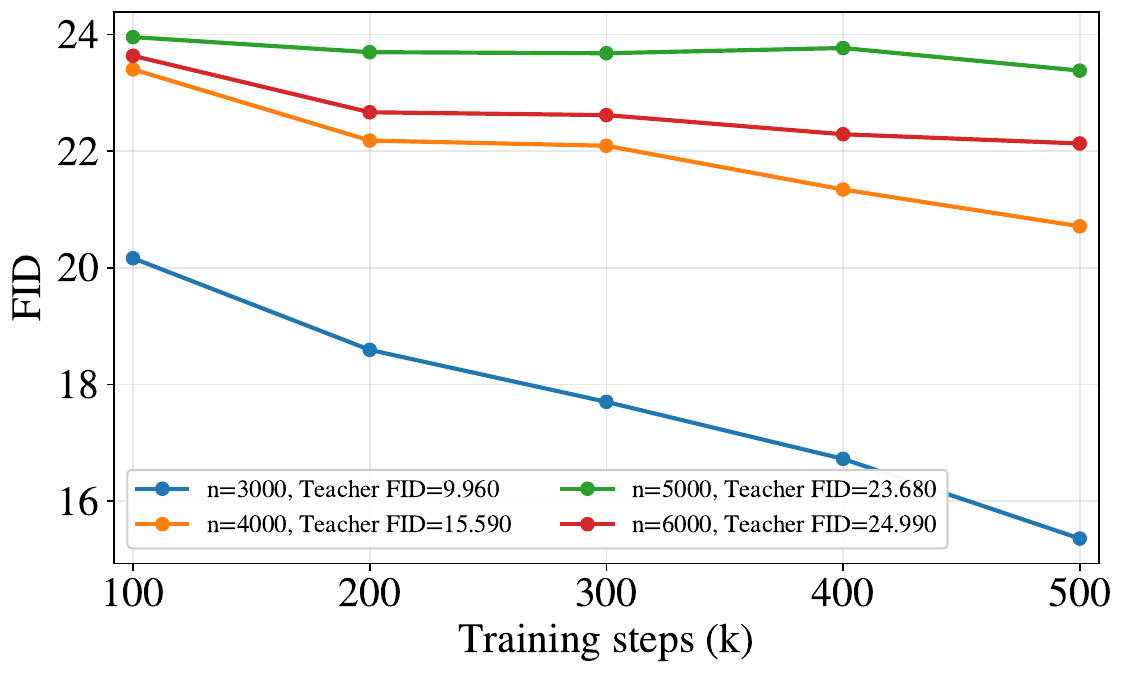}\vspace{-2mm}
    \label{fig:memorate_fid}
  \end{subfigure}
  \caption{
  \textbf{Student dynamics during consistency distillation on CIFAR-10.}
  \textbf{Left}: memorization ratio. \textbf{Right}: FID.
  Consistency distillation reliably reduces memorization, while FID improvement depends on the memorization state of the teacher.}\vspace{-2mm}
  \label{fig:student_dynamics_10mem}
\end{figure*}

Finally, we examine how memorization and generation quality evolve during the distillation process itself. 
In all settings, the EDM teacher is trained for 500k steps, and the consistency-distilled student is also trained for 500k steps for a fair comparison.
Figure~\ref{fig:student_dynamics_10mem} shows the student memorization ratio and FID as distillation proceeds. A consistent trend across all settings is that consistency distillation substantially suppresses memorization: although student memorization may increase gradually with distillation steps, it remains markedly below the teacher level throughout. 
At the same time, the FID behavior depends on the teacher regime. When the teacher memorization is relatively moderate, distillation not only reduces memorization but also yields students whose FID eventually surpasses that of the teacher. 
In contrast, when the teacher is already in a severely memorization-dominated regime, the student still exhibits much weaker memorization, but its FID remains worse than that of the teacher. 
In summary, consistency distillation reliably lowers memorization, while FID improves only when the teacher is not severely memorizing.
A theoretical explanation for the degraded student generation quality in the severe-teacher regime is provided in Appendix~\ref{sec:severe_mem}.

\section{Theoretical Analysis}
\label{sec:theory}

We now turn to the mechanism behind the empirical findings.
The goal is to explain why consistency distillation can reduce memorization while preserving the feature directions that support sample quality.

\subsection{One-step Consistency Objective.}

Our analysis considers a time-local regime with fixed $t'$ and $\Delta t\to 0$,
and focuses on a \emph{one-step} consistency distillation objective. While
consistency distillation is defined across multiple diffusion
times~\citep{pmlr-v202-song23a}, the one-step formulation isolates the
leading-order local consistency constraint induced by the teacher
probability-flow dynamics. It therefore provides a principled local
approximation to Eq.~(\ref{eq:cd_loss_general}), capturing the shared
leading-order update geometry underlying neighboring-step consistency
relations~\citep{george2025denoising,bonnaire2025why,li2025towards}. Under
this view, finite discretization mainly affects how accurately this local
constraint is realized in practice, while the leading-order mechanism is
already determined by the consistency objective itself. This interpretation is
also consistent with the discretization study in
Appendix~\ref{subsec:discretization_granularity}.
In this setting, the training objective compares the student outputs evaluated
at two nearby inputs connected by a single teacher-induced step:
\begin{equation}
L_{\mathrm{CD}}(\boldsymbol{\theta})
 = 
\frac{1}{n} \sum_{\nu=1}^{n}
\mathbb{E}_{\boldsymbol{\xi}}  
\left[ 
\left\| 
f_{\boldsymbol{\theta}} \big(\mathbf{x}_{\nu,t'}(\boldsymbol{\xi})\big)
 - 
f_{\boldsymbol{\theta}} \big(\widehat{\mathbf{x}}^\phi_{\nu,t}(\boldsymbol{\xi})\big) 
\right\|_2^2 
\right],
\label{eq:cd_loss_def}
\end{equation}
where $\mathbf{x}_{\nu,t}=e^{-t}\mathbf{x}_\nu + \sqrt{\Delta_t}\boldsymbol{\xi}$ and $\widehat{\mathbf{x}}^\phi_{\nu,t}(\boldsymbol{\xi})$ is the teacher-induced one-step target in~(\ref{eq:pf_ou_euler}).

\subsection{Random Feature Parameterization}

Following prior theoretical studies of diffusion learning dynamics and
representations~\citep{li2023generalization,bonnaire2025why,george2025denoising,han2025feature,han2025hidden}, we parameterize both the teacher and the student using an RFNN.
This tractable model makes the relevant feature geometry explicit while retaining the training-dynamics structure needed for our analysis.
An RFNN is a two-layer neural network in which the first-layer weights
$\mathbf{W} \in \mathbb{R}^{p\times d}$ are drawn i.i.d.\ from a Gaussian distribution and kept fixed, while only the second-layer weights are learned.
We work in an asymptotic regime where $d$, $p$, and $n$ jointly diverge to infinity, while the ratios $p/d$ and $n/d$ remain fixed.
The teacher and student share the same frozen random features matrix
$
\mathbf{W}_\phi = \mathbf{W}_\theta = \mathbf{W} \in \mathbb{R}^{p\times d}$,
$W_{ij} \stackrel{\text{i.i.d.}}{\sim} \mathcal{N}(0,1)$,
and use the same elementwise activation function $\sigma:\mathbb{R}\to\mathbb{R}$.
Define the feature map
$
\mathbf{h}(\mathbf{x})=
\sigma \left(\frac{\mathbf{W} \mathbf{x}}{\sqrt d}\right)
\in \mathbb{R}^p$.
Then the teacher score is modeled as
$
\mathbf{s}_\phi(\mathbf{x})
=
\frac{1}{\sqrt p}\, \mathbf{A}_\phi\, \mathbf{h}(\mathbf{x})$ with
$\mathbf{A}_\phi \in \mathbb{R}^{d\times p}$ is fixed,
while the student consistency mapping is parameterized as
$
\mathbf{f}_\theta(\mathbf{x})
=
\frac{1}{\sqrt p}\, \mathbf{B}_\theta\, \mathbf{h}(\mathbf{x})$ with
$\mathbf{B}_\theta \in \mathbb{R}^{d\times p}$ is trainable.
At fixed reference time $t'$ and step size $\Delta t$, the one-step distillation loss compares the student outputs evaluated at two nearby inputs generated by a teacher one-step update.
Let $
\Delta \mathbf{h}_\nu(\boldsymbol{\xi})
=
\mathbf{h}\big(\mathbf{x}_{\nu,t'}(\boldsymbol{\xi})\big)
-
\mathbf{h}\big(\widehat{\mathbf{x}}^\phi_{\nu,t}(\boldsymbol{\xi})\big)$
denote the feature increment induced by the teacher one-step update.
Under the RFNN parameterization, the output difference of the student model can be written as
$
\mathbf{f}_\theta \big(\mathbf{x}_{\nu,t'}(\boldsymbol{\xi})\big)
-
\mathbf{f}_\theta \big(\widehat{\mathbf{x}}^\phi_{\nu,t}(\boldsymbol{\xi})\big)
=
\frac{1}{\sqrt p}\, \mathbf{B}_\theta\, \Delta \mathbf{h}_\nu(\boldsymbol{\xi})$.
Substituting this expression into the one-step consistency distillation loss
(\ref{eq:cd_loss_def}), we obtain
\begin{align}
L_{\mathrm{CD}}(\mathbf{B}_\theta)
=
\frac{1}{n}\sum_{\nu=1}^{n}
\mathbb{E}_{\boldsymbol{\xi}}
\left[
\left\|
\frac{1}{\sqrt p}\, \mathbf{B}_\theta\, \Delta \mathbf{h}_\nu(\boldsymbol{\xi})
\right\|_2^2
\right]
=
\frac{1}{p}\,
\mathrm{Tr} \left(
\mathbf{B}_\theta\, \mathbf{U}_{\mathrm{cd}}\, \mathbf{B}_\theta^\top
\right),
\label{eq:cd_loss_quadratic}
\end{align}
where the consistency distillation curvature matrix
$\mathbf{U}_{\mathrm{cd}}\in\mathbb{R}^{p\times p}$ is defined as
\begin{equation}
\mathbf{U}_{\mathrm{cd}}
=
\frac{1}{n}\sum_{\nu=1}^{n}
\mathbb{E}_{\boldsymbol{\xi}}
\big[
\Delta \mathbf{h}_\nu(\boldsymbol{\xi})\,
\Delta \mathbf{h}_\nu(\boldsymbol{\xi})^\top
\big].
\label{eq:Ucd_def}
\end{equation}
Consequently, the spectrum of $\mathbf{U}_{\mathrm{cd}}$ fully characterizes the curvature of the one-step consistency distillation objective.

\subsection{Spectral Structure of Consistency Distillation}

We define the teacher-induced perturbation by
$
\boldsymbol{\delta x}(\mathbf{x},t',t) =
\widehat{\mathbf{x}}^\phi_t - \mathbf{x}_{t'}
=
\Delta t\, \mathbf{v}_\phi(\mathbf{x},t').
$
For notational simplicity, we will abbreviate
$\boldsymbol{\delta x}(\mathbf{x},t)$ as $\boldsymbol{\delta x}$ and $\mathbf{x_{t'}}$ as $\mathbf{x}$.
Under the RFNN parameterization, the objective of consistency distillation reduces to a quadratic form
whose curvature is determined by the second moment of the nonlinear feature
increment
$
\Delta \mathbf{h}
=
\mathbf{h}(\mathbf{x})
-
\mathbf{h}(\mathbf{x}+\boldsymbol{\delta x})$.
A central difficulty is that $\Delta \mathbf{h}$ is a nonlinear transformation
of random features.
To make this problem tractable in the high-dimensional regime, we adopt the
Gaussian equivalence principle for RFNN~\citep{george2025denoising,bonnaire2025why}, which replaces the
full feature interaction by an equivalent low-dimensional Gaussian
characterization.
Based on the assumptions in Appendix~\ref{assumption_appendix}, the following lemma characterizes the leading-order contribution to the curvature matrix $U_{cd}$ defined in (\ref{eq:Ucd_def}).

\begin{lemma}[Orthogonal second-moment decomposition of $\Delta \mathbf{h}$]
\label{lem:orth_decomp_delta_h}

Under Assumptions~\ref{ass:gaussian_equivalence} and
\ref{ass:activation_moment}, define the scalar coefficients
$
a_1(\mathbf{x},\boldsymbol{\delta x})=
\frac{
\mathbb E_{\zeta} \left[\Delta h_i\,\Delta g_i \mid \mathbf{x},\boldsymbol{\delta x}\right]
}{
\mathbb E_{\zeta} \left[\Delta g_i^2 \mid \mathbf{x},\boldsymbol{\delta x}\right]
}$ and
$a_0(\mathbf{x},\boldsymbol{\delta x})=
\frac{
\mathbb E_{\zeta} \left[\Delta h_i^2 \mid \mathbf{x},\boldsymbol{\delta x}\right]
}{
\mathbb E_{\zeta} \left[\Delta g_i^2 \mid \mathbf{x},\boldsymbol{\delta x}\right]
}
-
a_1(\mathbf{x},\boldsymbol{\delta x})^2$.
Then the conditional second moment of the feature increment admits the decomposition
\begin{equation}
\begin{aligned}
\mathbb E_{\zeta} \left[\Delta \mathbf{h}\Delta \mathbf{h}^\top    \mid   \mathbf{x},\boldsymbol{\delta x}\right]
  =
a_1(\mathbf{x},\boldsymbol{\delta x})^2
\mathbb E_{\zeta} \left[\Delta \mathbf{g}\Delta \mathbf{g}^\top      \mid   \mathbf{x},\boldsymbol{\delta x}\right]  
+& a_0(\mathbf{x},\boldsymbol{\delta x})
\mathbb E_{\zeta} \left[\Delta g_i^2   \mid \mathbf{x},\boldsymbol{\delta x}\right]\mathbf{I}_p,
\label{eq:delta_h_second_moment_decomp}
\end{aligned}
\end{equation}
where the conditional expectations
$\mathbb E_{\zeta}[\cdot \mid \mathbf{x},\boldsymbol{\delta x}]$
are taken with respect to an auxiliary Gaussian variable $\zeta$
representing the joint Gaussian law of the coordinate pairs
$(g_i,\Delta g_i)$ induced by a generic row $\mathbf{w}\sim\mathcal N(0,\mathbf{I}_d)$,
as specified in Assumption~\ref{ass:gaussian_equivalence}.

Assume further the small-noise one-step regime of
Assumption~\ref{ass:small_noise}.
In the isotropic setting $\boldsymbol{\Sigma}=\mathbf{I}_d$ and for the one-step
OU probability-flow update.
let $\gamma(t')$ and $\kappa(t')$ be the deterministic limits,
then $a_1(\mathbf{x},\boldsymbol{\delta x})$ and $a_0(\mathbf{x},\boldsymbol{\delta x})$
concentrate to deterministic limits
$a_1(t')$ and $a_0(t')$ given by~Eq.~(\ref{eq:a1}) and Eq.~(\ref{eq:a0}) in Appendix~\ref{lemma_proof}.
See Appendix~\ref{lemma_proof} for the proof.
\end{lemma}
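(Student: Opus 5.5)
The plan is to obtain the decomposition from a Gaussian-equivalence linearization of the feature increment, one coordinate at a time. Write $g_i=\mathbf{w}_i^\top\mathbf{x}/\sqrt d$ for the preactivation and $\Delta g_i=\mathbf{w}_i^\top\boldsymbol{\delta x}/\sqrt d$ for its increment, and collect them in the vector $\Delta\mathbf{g}$. Assumption~\ref{ass:gaussian_equivalence} lets us replace each $\Delta h_i=\sigma(g_i)-\sigma(g_i+\Delta g_i)$ by an equivalent model of the form
\[
\Delta h_i \;\simeq\; a_1\,\Delta g_i+\eta_i .
\]
Here $a_1$ is the $L^2$ projection coefficient of $\Delta h_i$ onto $\Delta g_i$ under the joint Gaussian law $\zeta$ of $(g_i,\Delta g_i)$, which is exactly the ratio in the statement. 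The residual $\eta_i$ is uncorrelated with $\Delta g_i$ by construction. Its variance is
\[
\mathbb E[\eta_i^2]=\mathbb E[\Delta h_i^2]-a_1^2\,\mathbb E[\Delta g_i^2]=a_0\,\mathbb E[\Delta g_i^2],
\]
which matches the definition of $a_0$. The coefficients do not depend on $i$ because the rows $\mathbf{w}_i$ are i.i.d., so the law of $(g_i,\Delta g_i)$ given $(\mathbf{x},\boldsymbol{\delta x})$ is the same for every $i$.

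Next I compute the second moment entrywise.
\begin{itemize}
\item \textbf{Diagonal entries.} These are $\mathbb E[\Delta h_i^2]=a_1^2\mathbb E[\Delta g_i^2]+a_0\mathbb E[\Delta g_i^2]$. This is the claimed diagonal: the $a_1^2\,\mathbb E[\Delta\mathbf g\Delta\mathbf g^\top]$ term supplies the first piece and the $\mathbf I_p$ term supplies the second.
\item \textbf{Off-diagonal entries ($i\neq j$).} By Assumption~\ref{ass:gaussian_equivalence}, the residuals $\eta_i$ and $\eta_j$ are asymptotically uncorrelated with each other and with $\Delta g_j$. Concretely, the cross-covariance of distinct rows is $O(1/\sqrt d)$ and enters only through its linear component, a Mehler-type expansion in which the higher Hermite components contribute $O(1/d)$. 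Hence $\mathbb E[\Delta h_i\Delta h_j]=a_1^2\mathbb E[\Delta g_i\Delta g_j]+o(\cdot)$.
\end{itemize}
Assumption~\ref{ass:activation_moment} guarantees that all these moments are finite and that the remainders are negligible in operator norm at leading order. Together these give \eqref{eq:delta_h_second_moment_decomp}.

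For the concentration claim I use Assumption~\ref{ass:small_noise}. With $\boldsymbol\Sigma=\mathbf I_d$, the quantities $\|\mathbf x\|^2/d$, $\mathbf x^\top\boldsymbol{\delta x}/d$ and $\|\boldsymbol{\delta x}\|^2/d$ concentrate by the law of large numbers in high dimension, since $\mathbf x_{t'}$ is a sum of the scaled data and Gaussian noise and $\boldsymbol{\delta x}=\Delta t\,\mathbf v_\phi$ is the OU PF-ODE drift. Their limits are the deterministic $\gamma(t')$ and $\kappa(t')$. The covariance of $(g_i,\Delta g_i)$ is a function of these three inner products, so it converges to a deterministic $2\times2$ matrix. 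The coefficients $a_1$ and $a_0$ are continuous functionals of this covariance, namely Gaussian integrals of $\sigma$, so they converge to $a_1(t')$ and $a_0(t')$.

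The explicit formulas follow from a Taylor expansion in the small step $\Delta g_i$:
\begin{itemize}
\item $\Delta h_i\approx-\sigma'(g_i)\Delta g_i-\tfrac12\sigma''(g_i)\Delta g_i^2$;
\item consequently $a_1\approx-\mathbb E[\sigma'(g)]$, corrected by the correlation between $g$ and $\Delta g$, and $a_0\approx\operatorname{Var}(\sigma'(g))$ plus higher-order terms.
\end{itemize}
Both are evaluated at the limiting variances.

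I expect the main obstacle to be the off-diagonal control. One has to show that the nonlinear residuals $\eta_i$ contribute only an identity-proportional term, rather than a low-rank spike aligned with $\mathbf x$ or $\boldsymbol{\delta x}$. The first thing I would try is to invoke the Gaussian-equivalence assumption directly. If a spike does appear, I would fold it into the $\Delta\mathbf g\Delta\mathbf g^\top$ structure or argue that it is lower order under the small-noise scaling.
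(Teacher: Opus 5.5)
\textbf{Verdict: genuine gap.} Your projection step is exactly the paper's. That step is: $a_1$ as the $L^2$ projection of $\Delta h_i$ onto $\Delta g_i$, an orthogonal residual, $\mathbb E[\eta_i^2]=a_0\,\mathbb E[\Delta g_i^2]$, and hence the diagonal.

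\textbf{Off-diagonal entries.} This is where the proof breaks. In the lemma the conditional expectation is over the rows $\mathbf w_i$ given $(\mathbf x,\boldsymbol{\delta x})$, and Assumption~\ref{ass:gaussian_equivalence} makes the pairs $(g_i,\Delta g_i)$ i.i.d.\ across $i$. Under this law there is no $O(1/\sqrt d)$ cross-row covariance; in fact $\mathbb E[\Delta g_i\Delta g_j]=0$ for $i\neq j$. Your picture belongs to a fixed-$\mathbf W$, random-input setting that the statement does not involve.

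Even in that setting, your Mehler argument would not yield an identity term, for two reasons.
\begin{enumerate}
\item You project onto $\Delta g_i$ alone, so $\eta_i$ keeps a linear Hermite component along the part of $g_i$ orthogonal to $\Delta g_i$. This component is generically of order $\Delta t$. Its cross-row correlations enter at the same $O(\Delta t^2/\sqrt d)$ order as $a_1^2\,\mathbb E[\Delta g_i\Delta g_j]$.
\item Entrywise $O(1/d)$ off-diagonal terms from higher Hermite orders add up to $O(1)$ in operator norm when $p\asymp d$. For example, the entrywise square of $\mathbf W\mathbf W^\top/d$ has off-diagonal mean $1/d$, i.e.\ a rank-one part of size about $p/d$.
\end{enumerate}
So ``remainders negligible in operator norm'' is not supported, and you leave this step open yourself.

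\textbf{The missing idea} is independence across coordinates:
\begin{itemize}
\item for $i\neq j$, $\mathbb E[\Delta g_iR_j]=\mathbb E[\Delta g_i]\,\mathbb E[R_j]=0$, because $\Delta g_i$ is centered;
\item $\mathbb E[\Delta g_iR_i]=0$ by the projection;
\item $\mathbb E[\mathbf R\mathbf R^\top]=\mathbb E[R^2]\,\mathbf I_p$.
\end{itemize}
This is the paper's argument, and it gives the decomposition exactly rather than up to $o(\cdot)$. Strictly, $\mathbb E[R_iR_j]=(\mathbb E\,\Delta h_i)^2$. This vanishes for odd $\sigma$ such as $\tanh$, because $G$ and $G+\Delta G$ are centered Gaussians. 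The only possible spike is therefore $(\mathbb E\,\Delta h_i)^2(\mathbf 1\mathbf 1^\top-\mathbf I_p)$, aligned with $\mathbf 1$ rather than with $\mathbf x$ or $\boldsymbol{\delta x}$.

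\textbf{The limits.} This part also falls short of Eqs.~(\ref{eq:a1}) and (\ref{eq:a0}). As $\Delta t\to 0$ the law of $(g_i,\Delta g_i)$ degenerates ($\Delta_d^2\to 0$), and $a_1,a_0$ are ratios with vanishing denominators. Continuity in the covariance is therefore not enough.

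The paper's argument proceeds as follows.
\begin{enumerate}
\item Rescale $Z=\Delta G/\Delta t$.
\item Use the mean-value Taylor remainder $\rho$ with $\mathbb E[\rho^2]=O(\Delta t^4)$ to get $a_1\to-\mathbb E[\sigma'(G)Z^2]/\kappa^2$ and $\mathbb E[Y^2]/\mathbb E[X^2]\to\mathbb E[\sigma'(G)^2Z^2]/\kappa^2$.
\item Evaluate these with $\mathbb E[Z^2\mid G]=\gamma^2G^2+\kappa^2-\gamma^2$.
\end{enumerate}
Equivalently, let $\omega(G)=\mathbb E[Z^2\mid G]/\kappa^2$, a probability weight since $\mathbb E\,\omega=1$. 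Then $a_1(t')=-\mathbb E[\sigma'(G)\,\omega(G)]$, and $a_0(t')$ is the variance of $\sigma'(G)$ under $\omega$.

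Your heuristic ``$a_0\approx\operatorname{Var}(\sigma'(g))$ plus higher-order terms'' is therefore wrong. The tilt $\omega-1=(\gamma^2/\kappa^2)(G^2-1)$ is $O(1)$, not higher order. Finally, the $\tfrac12\sigma''\Delta g_i^2$ term you retain contributes only $o(\Delta t^2)$ to the numerators and plays no role in the limits.
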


Lemma~\ref{lem:orth_decomp_delta_h} implies that, to leading order, the only source of non-isotropic structure in $\mathbf{U}_{\mathrm{cd}}$ arises from the rank-one term $\Delta \mathbf{g}\, \Delta \mathbf{g}^\top$.
All remaining contributions collapse to an isotropic shift.
Consequently, the learning geometry induced by consistency distillation is entirely governed by how the teacher-induced perturbation $\boldsymbol{\delta x}$ is embedded into the random feature space through $\Delta \mathbf{g} = \mathbf{W}\boldsymbol{\delta x}/\sqrt d$.
To make this dependence explicit, we relate the one-step teacher update to the
geometry of the random feature space.
The following lemma recalls a closed-form characterization of the trained
teacher top layer, adapted from~\citep{bonnaire2025why}.

\begin{lemma}[\citep{bonnaire2025why}]
\label{lem:teacher_top_layer}

For RFNN score-matching trained by gradient flow with zero initialization at fixed
$t'$ on~(\ref{eq:score_matching_emp}), the converged teacher top layer satisfies
\begin{equation}
\mathbf{A}_\phi=
-\frac{\sqrt p}{\sqrt{\Delta_{t'}}}\,\mathbf{V}^\top\,\mathbf{U}^{-1},
\label{eq:Aphi_closed_form}
\end{equation}
where $
\mathbf{U}   =  
\frac{1}{n}\sum_{\nu=1}^n 
\mathbb E_\xi \left[ 
\sigma  \left(\frac{\mathbf{W}\mathbf{x}_{\nu,t'}(\boldsymbol{\xi})}{\sqrt d}\right)
\sigma  \left(\frac{\mathbf{W}\mathbf{x}_{\nu,t'}(\boldsymbol{\xi})}{\sqrt d}\right)^\top 
\right]$,
$
\mathbf{V}
 = 
\frac{1}{n}\sum_{\nu=1}^n 
\mathbb E_\xi \left[ 
\sigma  \left(\frac{\mathbf{W}\mathbf{x}_{\nu,t'}(\boldsymbol{\xi})}{\sqrt d}\right) 
\boldsymbol{\xi}^\top 
\right]$,
and $\Delta_{t'}$ is the OU forward variance at time $t'$.
Assume further that
the data distribution $P_x$ has zero mean, covariance
$\boldsymbol{\Sigma}=\mathbb E[\mathbf{x}\mathbf{x}^\top]$ with bounded spectrum and sub-Gaussian tails.
In the proportional high-dimensional limit
$n,p,d \to \infty$, $\psi_p = \frac{p}{d}$, $\psi_n = \frac{n}{d}$, the Gaussian equivalence results hold:
\begin{enumerate}[leftmargin=*,nosep]
\item
(\emph{Lemma C.1 in \citep{bonnaire2025why}})
The empirical spectral distribution of $\mathbf{U}$ coincides, in the large-dimensional limit, with that of the Gaussian-equivalent matrix
\begin{equation}
\mathbf{U}
=
\frac{1}{n}\mathbf{G} \mathbf{G}^\top+
b_{t'}^{2}\frac{\mathbf{W} \mathbf{W}^\top}{d}
+
s_{t'}^{2}\mathbf{I}_p ,
\label{eq:U_GEP_form}
\end{equation}
where $ \mathbf{G}  =  e^{-t'} a_{t'}\,\frac{\mathbf{W} \mathbf{X}'}{\sqrt d}  +  v_{t'}\,\boldsymbol{\Omega}$.
Here $\mathbf{X}'   \in   \mathbb R^{d\times n}$ has i.i.d.\ columns
$\mathbf{x}'_\nu   \sim   \mathcal N(0,\boldsymbol{\Sigma})$,
$\boldsymbol{\Omega}\in\mathbb R^{p\times n}$ has i.i.d.\ $\mathcal N(0,1)$ entries independent of
$(\mathbf{W},\mathbf{X}')$, and scalars
$a_{t'}, b_{t'}, v_{t'}, s_{t'}$ depend on $(t',\sigma,\boldsymbol{\Sigma})$.

\item
(\emph{Lemma C.4 in \citep{bonnaire2025why}})
The cross-covariance matrix $\mathbf{V}$ admits the deterministic equivalent
\begin{equation}
\mathbf{V}
=
\mu_1(t')\frac{\sqrt{\Delta_{t'}}}{\Gamma_{t'}}
\frac{\mathbf{W}}{\sqrt d},
\label{eq:V_GEP_form}
\end{equation}
where
$
\mu_1(t')
  =  
\mathbb E_{u\sim\mathcal N(0,1)} \big[\sigma(\Gamma_{t'}u)u\big], \Gamma_{t'}^{\,2}
  =   
e^{-2t'} \frac{\mathrm{Tr}(\boldsymbol{\Sigma})}{d}
  +  \Delta_{t'}$. By Gaussian integration by parts (Stein's lemma),
\(
\mu_1(t') = \Gamma_{t'}\,\mathbb E_{Z\sim\mathcal N(0,\Gamma_{t'}^{2})}[\sigma'(Z)].
\)
\end{enumerate}
\end{lemma}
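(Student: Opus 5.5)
The closed form \eqref{eq:Aphi_closed_form} comes from the gradient-flow dynamics of the quadratic score-matching loss. The two high-dimensional statements are imported from \citep{bonnaire2025why}; the main work is checking that our OU scaling and notation match theirs.

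\textbf{Closed form.} Substitute $\mathbf{s}_\phi(\mathbf{x})=\frac{1}{\sqrt p}\mathbf{A}_\phi\mathbf{h}(\mathbf{x})$ into \eqref{eq:score_matching_emp} and expand the square. Up to an $\mathbf{A}_\phi$-independent constant, this gives
\[
\mathcal L_{t'}(\mathbf{A}_\phi)=\frac{\Delta_{t'}}{p}\mathrm{Tr}\big(\mathbf{A}_\phi\mathbf{U}\mathbf{A}_\phi^\top\big)+\frac{2\sqrt{\Delta_{t'}}}{\sqrt p}\mathrm{Tr}\big(\mathbf{A}_\phi\mathbf{V}\big),
\]
with $\mathbf{U},\mathbf{V}$ as defined in the statement. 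The loss is convex and quadratic, so gradient flow is linear. Its gradient is
\[
\frac{2\Delta_{t'}}{p}\mathbf{A}_\phi\mathbf{U}+\frac{2\sqrt{\Delta_{t'}}}{\sqrt p}\mathbf{V}^\top.
\]
Starting from $\mathbf{A}_\phi(0)=0$, the flow has the explicit solution
\[
\mathbf{A}_\phi(\tau)=-\frac{\sqrt p}{\sqrt{\Delta_{t'}}}\mathbf{V}^\top\mathbf{U}^{-1}\big(\mathbf{I}-e^{-2\Delta_{t'}\tau\mathbf{U}/p}\big).
\]
Sending $\tau\to\infty$ yields \eqref{eq:Aphi_closed_form}, provided $\mathbf{U}\succ0$. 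Positive definiteness holds because the noise term contributes at least $s_{t'}^2\mathbf{I}_p$, with $s_{t'}^2>0$ for a non-linear $\sigma$. If $\mathbf{U}$ were singular, the same formula would hold with a pseudo-inverse, since zero initialization keeps the iterates in the range of $\mathbf{U}$.

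\textbf{Gaussian-equivalent form of $\mathbf{U}$ (item 1).} Write $\mathbf{W}\mathbf{x}_{\nu,t'}/\sqrt d=e^{-t'}\mathbf{W}\mathbf{x}_\nu/\sqrt d+\sqrt{\Delta_{t'}}\mathbf{W}\boldsymbol\xi/\sqrt d$. Apply the Hermite/Gaussian-equivalence expansion of $\sigma$, where each preactivation is approximately $\mathcal N(0,\Gamma_{t'}^2)$ because $\|\mathbf{w}_i\|^2/d\to1$. Averaging over $\boldsymbol\xi$ then splits $\mathbf{U}$ into three pieces:
\begin{itemize}[leftmargin=*,nosep]
\item a data-dependent mean part $\frac1n\mathbf{G}\mathbf{G}^\top$, whose linear component is $e^{-t'}a_{t'}\mathbf{W}\mathbf{X}/\sqrt d$ and whose nonlinear remainder is modeled by the independent Gaussian $v_{t'}\boldsymbol\Omega$;
\item a noise-linear part $b_{t'}^2\mathbf{W}\mathbf{W}^\top/d$;
\item an isotropic part $s_{t'}^2\mathbf{I}_p$ collecting higher Hermite orders.
\end{itemize}
Replacing $\mathbf{X}$ by the Gaussian $\mathbf{X}'$ uses the zero-mean, sub-Gaussian, bounded-covariance assumptions. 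Equality of limiting spectral distributions is then exactly Lemma C.1 of \citep{bonnaire2025why}; I would verify only that their scalings of $a,b,v,s$ coincide with ours.

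\textbf{Deterministic equivalent of $\mathbf{V}$ (item 2).} For row $i$, $\mathbb E_\xi[\sigma(\mathbf{w}_i^\top\mathbf{x}_{\nu,t'}/\sqrt d)\boldsymbol\xi]$ is computed by Stein's lemma in $\boldsymbol\xi$, which gives
\[
\frac{\sqrt{\Delta_{t'}}}{\sqrt d}\,\mathbb E\big[\sigma'(\cdot)\big]\,\mathbf{w}_i.
\]
Averaging over $\nu$ replaces $\sigma'$ by its expectation under $\mathcal N(0,\Gamma_{t'}^2)$, with fluctuations vanishing in the proportional limit. This produces $\mathbb E[\sigma'(Z)]\sqrt{\Delta_{t'}}\,\mathbf{W}/\sqrt d$. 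A second application of Stein's identity, $\mu_1=\Gamma_{t'}\mathbb E[\sigma'(Z)]$, rewrites this as \eqref{eq:V_GEP_form}.

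The main obstacle is rigor in item 1: controlling the nonlinear remainder uniformly enough that the spectrum of $\mathbf{U}$ is unchanged. I would not reprove this and would instead cite the reference's concentration arguments.
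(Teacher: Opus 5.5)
Your proposal is correct and matches how the paper treats this lemma. The paper gives no proof and imports the closed form and both Gaussian-equivalence items from \citep{bonnaire2025why}; you additionally verify the closed form as the $\tau\to\infty$ limit of the explicit linear gradient-flow solution, and item~2 via Stein's lemma in $\boldsymbol{\xi}$ plus concentration over $\nu$, and both computations are right, including the constant $\mu_1(t')\sqrt{\Delta_{t'}}/\Gamma_{t'}$. The one weak point is invertibility: the $s_{t'}^2\mathbf{I}_p$ term of the Gaussian-equivalent matrix only matches the limiting spectral distribution, so it cannot certify $\mathbf{U}\succ 0$ at finite size. This is harmless, because positivity follows directly from $\mathbf{v}^\top\mathbf{U}\mathbf{v}=\frac1n\sum_\nu\mathbb{E}_{\boldsymbol{\xi}}[(\mathbf{v}^\top\mathbf{h}(\mathbf{x}_{\nu,t'}))^2]$ and the full support of $\mathbf{x}_{\nu,t'}$ for $t'>0$ (for generic $\mathbf{W}$ and, e.g., $\sigma=\tanh$), and your pseudo-inverse fallback is valid since any $\mathbf{u}\in\ker\mathbf{U}$ satisfies $\mathbf{u}^\top\mathbf{h}=0$ almost surely, hence $\mathbf{u}^\top\mathbf{V}=0$.
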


Combining Lemma~\ref{lem:orth_decomp_delta_h} with the teacher characterization in Lemma~\ref{lem:teacher_top_layer}, we obtain the following structural characterization of the consistency distillation curvature.

\begin{theorem}
\label{thm:Ucd_structure_pfode_empirical}

Let $\{\mathbf{x}_\nu\}_{\nu=1}^n\subset\mathbb R^d$ be training samples with $\mathrm{Cov}(\mathbf{x}_\nu)=\boldsymbol{\Sigma}=\mathbf{I}_d$.
At a fixed diffusion time $t'>0$, draw $\boldsymbol{\xi}\sim\mathcal N(0,\mathbf{I}_d)$ and define the OU forward sample, and assume the small-noise one-step regime and the deterministic limits
$a_1(t')$, $a_0(t')$ from Lemma~\ref{lem:orth_decomp_delta_h}.
Then, as $\Delta t\to 0$,
\begin{equation}
\mathbf{U}_{\mathrm{cd}}
  =  
\Delta t^2 a_1(t')^2 
\Big( 
\mathbf{S}
 - \mu_1(t')^2\,\mathbf{S}\mathbf{U}^{-1}\mathbf{S}
 \Big) + \beta(t',\Delta t)\mathbf{I}_p,
\label{eq:Ucd_pfode_structure_emp}
\end{equation}
where $\mathbf{S} = \frac{\mathbf{W} \mathbf{W}^\top}{d}$ and the isotropic shift is
$
\beta(t',\Delta t)
= a_0(t')\,a_1(t')^2\,
\Delta t^2 t'^2\,
\frac{1}{d}\,\mathrm{Tr} \left(\mathbf{U}^{-1}\mathbf{S}\right)$.
See Appendix~\ref{the_proof} for the proof.
\end{theorem}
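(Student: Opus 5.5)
We start from Lemma~\ref{lem:orth_decomp_delta_h}. Averaging the conditional decomposition \eqref{eq:delta_h_second_moment_decomp} over $\nu$ and $\boldsymbol{\xi}$, and using that $a_1,a_0$ concentrate to $a_1(t'),a_0(t')$, gives
\[
\mathbf{U}_{\mathrm{cd}} \approx a_1(t')^2\,\mathbb E\big[\Delta\mathbf g\,\Delta\mathbf g^\top\big] + a_0(t')\,\mathbb E\big[\Delta g_i^2\big]\,\mathbf I_p .
\]
Here $\Delta\mathbf g=\mathbf W\boldsymbol{\delta x}/\sqrt d$ and $\boldsymbol{\delta x}=\Delta t\,\mathbf v_\phi(\mathbf x,t')$. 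So $\mathbf{U}_{\mathrm{cd}}$ reduces, up to the isotropic term, to $\Delta t^2 a_1^2\,\frac{\mathbf W}{\sqrt d}\,\mathbf M\,\frac{\mathbf W^\top}{\sqrt d}$, where $\mathbf M=\mathbb E[\mathbf v_\phi\mathbf v_\phi^\top]$ is the second moment of the teacher velocity over the empirical OU marginal at $t'$. The proof is therefore mainly a computation of $\mathbf M$ with the teacher of Lemma~\ref{lem:teacher_top_layer}.

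First, expand the velocity $\mathbf v_\phi=-\mathbf x-\mathbf s_\phi(\mathbf x)$ with $\mathbf s_\phi=\frac{1}{\sqrt p}\mathbf A_\phi\mathbf h(\mathbf x)=-\frac{1}{\sqrt{\Delta_{t'}}}\mathbf V^\top\mathbf U^{-1}\mathbf h(\mathbf x)$. Then
\[
\mathbf M=\mathbb E[\mathbf x\mathbf x^\top]-\tfrac{1}{\sqrt{\Delta_{t'}}}\big(\mathbb E[\mathbf x\mathbf h^\top]\mathbf U^{-1}\mathbf V+\text{transpose}\big)+\tfrac{1}{\Delta_{t'}}\mathbf V^\top\mathbf U^{-1}\mathbb E[\mathbf h\mathbf h^\top]\mathbf U^{-1}\mathbf V .
\]
Use the definition of $\mathbf U$, i.e.\ $\mathbb E[\mathbf h\mathbf h^\top]=\mathbf U$, to collapse the last term to $\frac{1}{\Delta_{t'}}\mathbf V^\top\mathbf U^{-1}\mathbf V$.

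For the cross term, write $\mathbf x=e^{-t'}\mathbf x_\nu+\sqrt{\Delta_{t'}}\boldsymbol{\xi}$. The $\boldsymbol{\xi}$ part produces $\sqrt{\Delta_{t'}}\mathbf V^\top$. The data part is handled with the Gaussian-equivalent linear component $\mathbb E[\mathbf h\,\mathbf x^\top]\propto\mu_1\mathbf W/\sqrt d$ (Stein's lemma) together with $\boldsymbol\Sigma=\mathbf I_d$. Substituting \eqref{eq:V_GEP_form}, $\mathbf V=\mu_1\frac{\sqrt{\Delta_{t'}}}{\Gamma_{t'}}\frac{\mathbf W}{\sqrt d}$, and noting $\Gamma_{t'}=1$ when $\boldsymbol\Sigma=\mathbf I_d$, every term becomes a polynomial in $\mathbf W^\top\mathbf U^{-1}\mathbf W/d$.

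Conjugating back with $\mathbf W/\sqrt d$ turns $\frac{\mathbf W}{\sqrt d}\mathbf I_d\frac{\mathbf W^\top}{\sqrt d}$ into $\mathbf S$. The cross and quadratic terms each become multiples of $\mu_1^2\mathbf S\mathbf U^{-1}\mathbf S$: the cross term gives $-2\mu_1^2\mathbf S\mathbf U^{-1}\mathbf S$ and the quadratic term gives $+\mu_1^2\mathbf S\mathbf U^{-1}\mathbf S$, so together they yield $\mathbf S-\mu_1^2\mathbf S\mathbf U^{-1}\mathbf S$. This matches the signs in \eqref{eq:Ucd_pfode_structure_emp}.

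For the isotropic shift, compute $\mathbb E[\Delta g_i^2]=\frac{1}{d}\mathrm{Tr}(\mathbf W\mathbf M\mathbf W^\top)/p$ to leading order. The normalized trace of the $\mathbf U^{-1}$-dependent part gives $\frac1d\mathrm{Tr}(\mathbf U^{-1}\mathbf S)$, with a time prefactor that, under the small-noise regime and the deterministic limits $\gamma(t'),\kappa(t')$ of Assumption~\ref{ass:small_noise}, is to be identified as $t'^2$. Multiplying by $a_0a_1^2\Delta t^2$ gives $\beta$. Terms of order $o(\Delta t^2)$ and trace fluctuations vanish in the proportional limit.

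The main obstacle is the cross term $\mathbb E[\mathbf x\mathbf h^\top]$ together with the exact bookkeeping of constants. It requires using the Gaussian equivalence \eqref{eq:U_GEP_form} consistently, so that the nonlinear part ($v_{t'}\boldsymbol\Omega$, $s_{t'}$) contributes nothing anisotropic to leading order. I would first verify the exact cancellation producing the coefficient $-\mu_1^2$. I would also check whether the $t'^2$ factor in $\beta$ really arises from the small-noise limits or is an artifact of how $\Delta t$ is normalized; this identification is what I would be least sure of and would revisit first if the trace prefactor does not match.
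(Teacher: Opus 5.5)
Your treatment of the anisotropic term is correct and is essentially the paper's argument. $\mathbb{E}[\mathbf{h}\mathbf{h}^\top]=\mathbf{U}$ collapses the quadratic term. Stein's lemma with $\Gamma_{t'}=1$ gives $\frac1n\sum_\nu\mathbb{E}_\xi[\mathbf{x}\mathbf{h}^\top]\simeq\mu_1(t')\mathbf{B}^\top$; your split into the $\boldsymbol{\xi}$-part $\sqrt{\Delta_{t'}}\mathbf{V}^\top$ and the data part reproduces this because $e^{-2t'}+\Delta_{t'}=1$. The $-2\mu_1^2+\mu_1^2$ bookkeeping then yields $\mathbf{S}-\mu_1(t')^2\mathbf{S}\mathbf{U}^{-1}\mathbf{S}$ after conjugating by $\mathbf{B}=\mathbf{W}/\sqrt d$.

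The gap is the isotropic shift, and the step you flagged genuinely fails. Lemma~\ref{lem:orth_decomp_delta_h} attaches only $a_0$, not $a_0a_1^2$, to $\mathbb{E}_\zeta[\Delta g_i^2\mid\mathbf{x},\boldsymbol{\delta x}]=\|\boldsymbol{\delta x}\|_2^2/d$. Your own $\mathbf{M}$ then gives
\[
\frac1n\sum_{\nu=1}^n\mathbb{E}_\xi\frac{\|\boldsymbol{\delta x}\|_2^2}{d}
=\frac{\Delta t^2}{d}\mathrm{Tr}\,\mathbf{M}
\simeq\Delta t^2\Bigl(1-\mu_1(t')^2\,\frac1d\mathrm{Tr}(\mathbf{U}^{-1}\mathbf{S})\Bigr)
=\Delta t^2\kappa(t')^2,
\]
which is consistent with the small-noise assumption, since $\eta=-\mu_1^2\frac1d\mathrm{Tr}(\mathbf{U}^{-1}\mathbf{S})$ and $\upsilon=\mu_1^2\frac1d\mathrm{Tr}(\mathbf{U}^{-1}\mathbf{S})$. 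So the OU computation produces $\beta=a_0(t')\Delta t^2\kappa(t')^2$. This contains the $O(1)$ contribution of $\mathbb{E}[\mathbf{x}\mathbf{x}^\top]$, the trace enters with a minus sign, and there is no $t'^2$ and no $a_1^2$. No appeal to the limits $\gamma(t'),\kappa(t')$ turns it into the stated expression.

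The paper reaches the stated $\beta$ only through two further moves. First, it evaluates $\|\boldsymbol{\delta x}\|_2^2$ with $\boldsymbol{\delta x}=\Delta t\,t'\,\mathbf{s}_\phi$. This is an EDM-type Euler step for $\mathrm{d}\mathbf{x}/\mathrm{d}t=-t\,\mathbf{s}$; it drops the $-\mathbf{x}$ drift and is inconsistent with the $\boldsymbol{\delta x}=-\Delta t(\mathbf{x}+\mathbf{s}_\phi)$ used for the anisotropic part. Via $\frac1p\mathrm{Tr}(\mathbf{A}_\phi^\top\mathbf{A}_\phi\mathbf{U})=(\mu_1/\Gamma_{t'})^2\mathrm{Tr}(\mathbf{U}^{-1}\mathbf{S})$, this yields $a_0\Delta t^2t'^2(\mu_1/\Gamma_{t'})^2\frac1d\mathrm{Tr}(\mathbf{U}^{-1}\mathbf{S})$. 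Second, it replaces $(\mu_1/\Gamma_{t'})^2$ by $a_1(t')^2$.

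That second move is not implied by \eqref{eq:a1}. With $\Gamma_{t'}=1$ one has $a_1(t')=-\mu_1(t')-\frac{\gamma(t')^2}{\kappa(t')^2}\bigl(\mathbb{E}[\sigma'(G)G^2]-\mathbb{E}[\sigma'(G)]\bigr)$, and the bracket is nonzero for generic activations such as $\tanh$. To reproduce the stated $\beta$ you would have to adopt both moves as extra conventions. From the stated OU hypotheses, your method, carried out consistently, gives $\beta=a_0(t')\Delta t^2\kappa(t')^2$ instead.
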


\begin{figure}[t]
  \centering
  \begin{subfigure}[t]{0.48\textwidth}
    \centering
    \includegraphics[width=\linewidth]{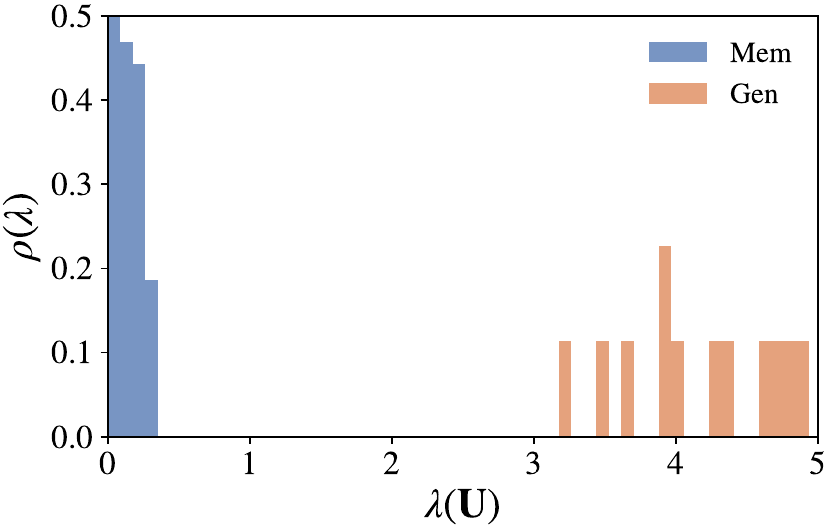}
    \caption{Spectral density of $\mathbf{U}$.}
    \label{fig:U_split_hist}
  \end{subfigure}
  \begin{subfigure}[t]{0.48\textwidth}
    \centering
    \includegraphics[width=\linewidth]{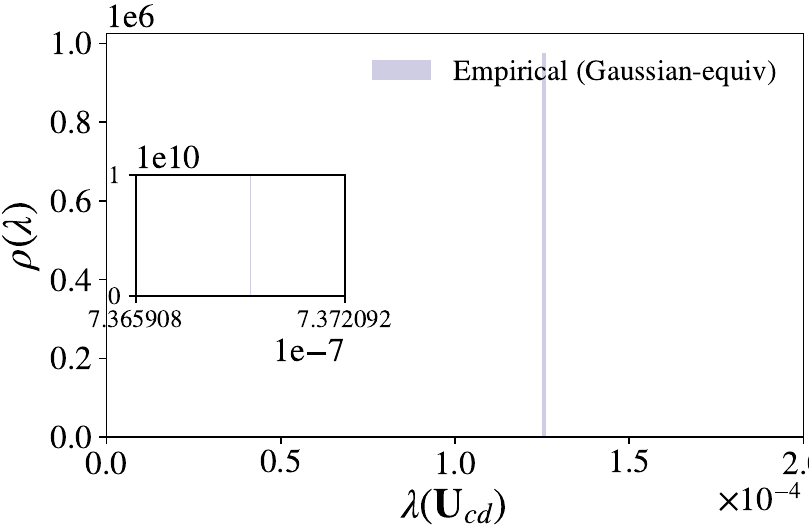}
    \caption{Spectral density of $\mathbf{U}_{\mathrm{cd}}$.}
    \label{fig:Ucd_spectrum_zoom}
  \end{subfigure}
 \caption{
\textbf{Spectral density of the teacher and consistency distillation curvature operators.}
\textbf{(a)} The teacher curvature operator $U$ exhibits a separated spectrum, with
low-eigenvalue modes associated with memorization and high-eigenvalue modes associated
with generalization.
\textbf{(b)} The consistency distillation curvature $U_{\mathrm{cd}}$ shows
sharp spectral atoms: a dominant spike at $\lambda=\beta$ induced by the isotropic
shift acting on the nullspace of $S$, while the remaining nontrivial eigenvalues are
concentrated in a low-dimensional subspace.
Both panels are computed with $\psi_p = 32$, $\psi_n = 4$, $t' = 0.01$, $\Delta t = 0.001$,
and $\rho_{\Sigma}(\lambda) = \delta(\lambda - 1)$. See Appendix~\ref{theory_exp} for details.} \vspace{-2mm}
  \label{fig:transfer_three}
\end{figure}

Theorem~\ref{thm:Ucd_structure_pfode_empirical} provides an explicit decomposition
of the curvature induced by one-step consistency distillation.
The resulting matrix $\mathbf{U}_{\mathrm{cd}}$ consists of two qualitatively
distinct components: an isotropic shift $\beta \mathbf{I}_p$, and a structured,
non-isotropic term $
\mathbf{A}=\mathbf{S}-\mu_1(t')^2\,\mathbf{S}\mathbf{U}^{-1}\mathbf{S}$.
Since $\mathbf{U}$ appears explicitly inside the structured term $\mathbf{A}$ via $\mathbf{U}^{-1}$, the teacher eigen-geometry provides the
natural coordinate system for understanding how consistency distillation redistributes curvature across memorization- and generalization-associated directions.
As established in~\citep{bonnaire2025why}, the empirical spectral density of $\mathbf{U}$ exhibits a characteristic two-bulk structure in the overparameterized regime.
This is visible in Fig.~\ref{fig:U_split_hist}, where modes with relatively large eigenvalues
$\lambda_i(\mathbf{U})$ align with generalization-dominated directions, whereas small eigenvalues correspond predominantly
to memorization-dominated directions.
This separation will be inherited by the structured deformation $\mathbf{A}=\mathbf{S}-\mu_1^2 \mathbf{S} \mathbf{U}^{-1}\mathbf{S}$ through the dependence on $\mathbf{U}^{-1}$.

We now turn to the empirical spectral density of $\mathbf{U}_{\mathrm{cd}}$ in
Fig.~\ref{fig:Ucd_spectrum_zoom}.
Theorem~\ref{thm:Ucd_structure_pfode_empirical} predicts a \emph{sharp spectral spike} induced by the isotropic term,
and the origin is purely algebraic: the random-feature Gram operator $\mathbf{S}=\frac{1}{d}\mathbf{W}\mathbf{W}^\top$ has rank at most $d$,
hence
$
\mathbb R^p = \ker(\mathbf{S})\oplus\mathrm{Im}(\mathbf{S}), \dim(\mathrm{Im}(\mathbf{S}))\le d\ll p$.
For $\mathbf{v}\in\ker(\mathbf{S})$, we have $\mathbf{S}\mathbf{v}=0$ and therefore $\mathbf{A}\mathbf{v}=0$, implying $\mathbf{U}_{\mathrm{cd}}\mathbf{v}=\beta \mathbf{v}$.
Thus, $\mathbf{U}_{\mathrm{cd}}$ has an eigenvalue exactly at $\lambda=\beta$ with multiplicity at least $p-d$,
which explains the prominent spike in Fig.~\ref{fig:Ucd_spectrum_zoom}.
Beyond this atom, all remaining eigenvalues are confined to the low-dimensional subspace $\mathrm{Im}(\mathbf{S})$, where
$
\left.\mathbf{U}_{\mathrm{cd}}\right|_{\mathrm{Im}(\mathbf{S})}
=
\beta \mathbf{I}
+
\Delta t^2 a_1(t')^2\,\left.\mathbf{A}\right|_{\mathrm{Im}(\mathbf{S})},
\dim(\mathrm{Im}(\mathbf{S}))\le d $.
Hence there are at most $d$ non-isotropic eigenvalues beyond the atom at $\beta$.
Moreover, because the prefactor $\Delta t^2 a_1(t')^2$ is of order $\Delta t^2$ and the two terms
in $\mathbf{A}=\mathbf{S}-\mu_1^2\mathbf{S}\mathbf{U}^{-1}\mathbf{S}$ can partially cancel within $\mathrm{Im}(\mathbf{S})$, the spectrum of the structured component
is typically highly compressed, appearing as a thin spike in Fig.~\ref{fig:Ucd_spectrum_zoom}.

\subsection{Empirical Validation of Spectral Filtering}
\label{sec:empirical_spectral_transfer}

To test this mechanism, we assess whether the non-isotropic consistency distillation term suppresses memorization-associated
directions while preserving those relevant for generalization.
We analyze its
action along the teacher eigenmodes
$\{\mathbf{u}_i\}_{i=1}^p$ of the curvature matrix $\mathbf{U}$.

\begin{figure*}[t]
  \centering
  \begin{subfigure}[t]{0.32\textwidth}
    \centering
    \includegraphics[width=\linewidth]{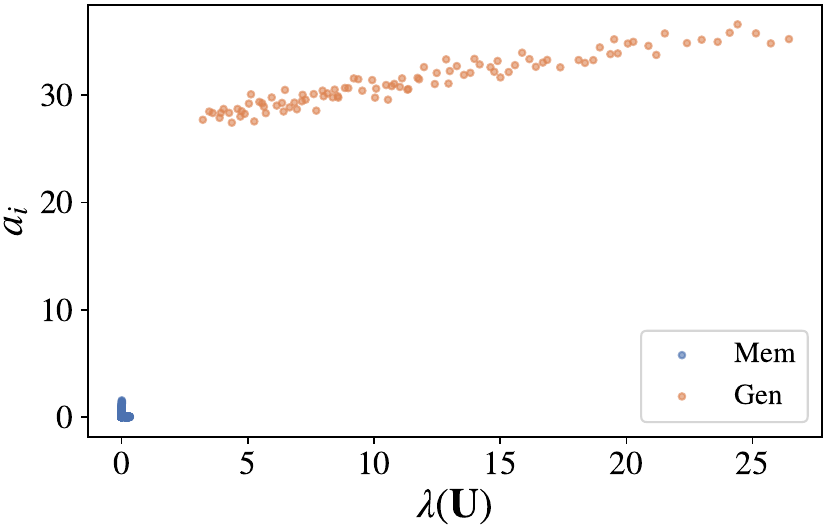}\vspace{-1mm}
    \caption{Visibility $a_i$.}
    \label{fig:a_vs_lambdaU}
  \end{subfigure}
  \hfill
  \begin{subfigure}[t]{0.32\textwidth}
    \centering
    \includegraphics[width=\linewidth]{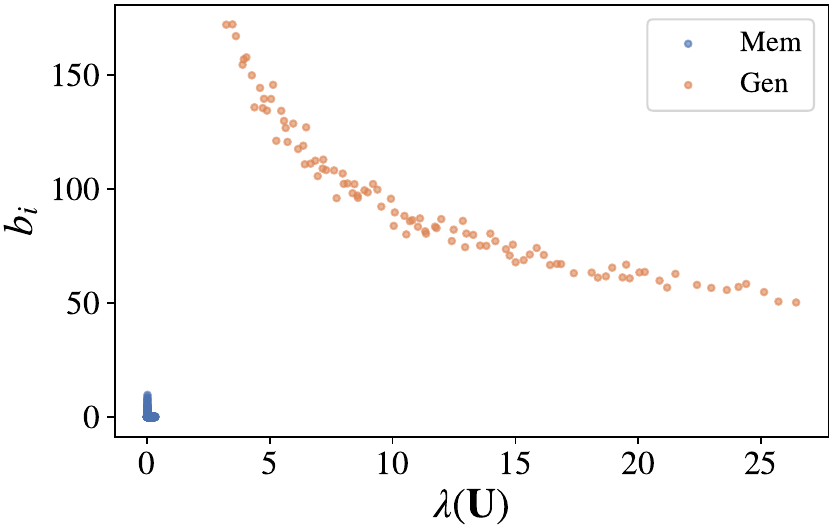}\vspace{-1mm}
    \caption{Resolvent term $b_i$.}
    \label{fig:b_vs_lambdaU}
  \end{subfigure}
    \hfill
  \begin{subfigure}[t]{0.32\textwidth}
    \centering
    \includegraphics[width=\linewidth]{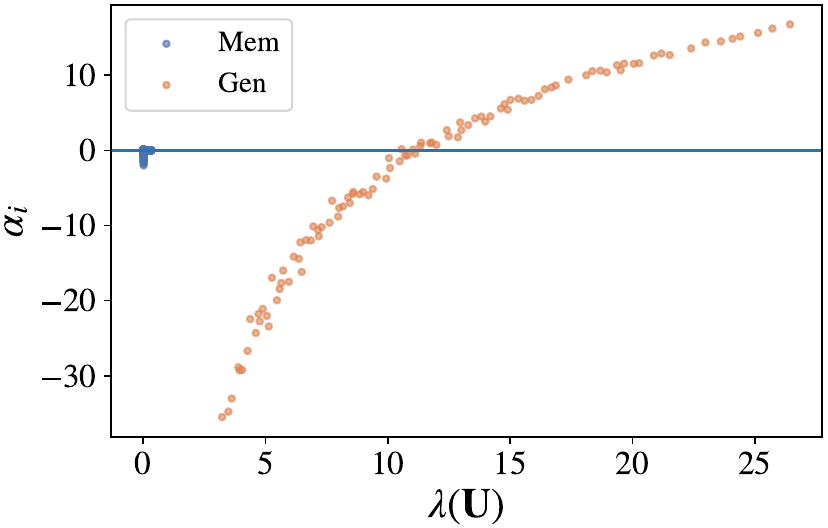}\vspace{-1mm}
    \caption{Signed response $\alpha_i$.}
    \label{fig:alpha_vs_lambdaU}
  \end{subfigure}
  \caption{
\textbf{Mode-wise spectral effects of non-isotropic consistency distillation.}
Each point corresponds to a teacher eigenmode $\mathbf{u}_i$, plotted against its
curvature eigenvalue $\lambda_i(\mathbf{U})$, with modes partitioned into
memorization-associated (Mem) and generalization-associated (Gen) subspaces.
\textbf{(a)} The visibility $a_i=\mathbf{u}_i^\top\mathbf{S}\mathbf{u}_i$ is uniformly
small for Mem modes and significantly larger for Gen modes.
\textbf{(b)} The resolvent term
$b_i=\mathbf{u}_i^\top\mathbf{S}\mathbf{U}^{-1}\mathbf{S}\mathbf{u}_i$ decreases with
$\lambda_i(\mathbf{U})$ within the Gen subspace.
\textbf{(c)} The resulting response
$\alpha_i=a_i-\mu_1(t')^2 b_i$ is negligible for Mem modes, while positive updates
concentrate in high-curvature Gen modes.
Results use $\psi_p=32$, $\psi_n=4$, $t'=0.01$, and
$\rho_{\Sigma}(\lambda)=\delta(\lambda-1)$.
}\vspace{-2mm}
\label{fig:transfer_three}
\end{figure*}

\noindent\textbf{Per-mode decomposition of the non-isotropic consistency distillation response.}
Recall that the leading non-isotropic component of the consistency distillation curvature takes the
form $\mathbf{A} = \mathbf{S}-\mu_1(t')^2\,\mathbf{S}\mathbf{U}^{-1}\mathbf{S}$.
For each teacher eigenmode $\mathbf{u}_i$, we introduce the quadratic forms
\begin{align}
a_i
=
\mathbf{u}_i^\top \mathbf{S}\,\mathbf{u}_i
\;\ge\;0, \quad
b_i
=
\mathbf{u}_i^\top \mathbf{S}\mathbf{U}^{-1}\mathbf{S}\,\mathbf{u}_i
=
(\mathbf{S}\mathbf{u}_i)^\top \mathbf{U}^{-1}(\mathbf{S}\mathbf{u}_i)
\;\ge\;0,
\end{align}
which respectively measure (i) the \emph{visibility} of mode $\mathbf{u}_i$ under
the random-feature metric $\mathbf{S}$, and (ii) the strength of its
\emph{resolvent-mediated subtraction} via $\mathbf{U}^{-1}$.
The resulting signed response along $\mathbf{u}_i$ is
\begin{equation}
\alpha_i
=
\mathbf{u}_i^\top \mathbf{A}\,\mathbf{u}_i
=
a_i
-
\mu_1(t')^2\, b_i,
\end{equation}
which quantifies the \emph{net non-isotropic consistency distillation update} assigned to that mode:
$\alpha_i<0$ corresponds to suppression, while $\alpha_i>0$ indicates net
retention.

\noindent\textbf{Mem/Gen partition.}
Following the established spectral geometry of $\mathbf{U}$, we classify modes
into memorization-associated subspaces (Mem) and generalization-associated subspaces (Gen) according to a fixed
threshold $\lambda_{\mathrm{th}}$, defined as
$
\mathcal{I}_{\mathrm{mem}}
=
\{i:\lambda_i(\mathbf{U})<\lambda_{\mathrm{th}}\},
\mathcal{I}_{\mathrm{gen}}
=
\{i:\lambda_i(\mathbf{U})\ge\lambda_{\mathrm{th}}\}$.

\noindent\textbf{(I) Visibility $a_i$ and resolvent structure $b_i$.}
Figures~\ref{fig:a_vs_lambdaU} and~\ref{fig:b_vs_lambdaU} visualize the two
constituent terms of $\alpha_i$.
Mem modes have uniformly small $a_i$, reflecting that these teacher eigenmodes are
highly sample-specific and largely orthogonal to the representational subspace
spanned by random features.
In contrast, Gen modes attain substantially larger $a_i$, as their smoother,
shared structure across samples aligns more strongly with the random-feature span,
leading to increasing visibility with $\lambda_i(\mathbf{U})$.
As for resolvent term $b_i$,
within the Gen subspace, lower-$\lambda$ modes incur larger $b_i$, reflecting
stronger subtraction induced by $\mathbf{U}^{-1}$, whereas higher-$\lambda$ Gen
modes are progressively less affected.
In contrast, Mem modes exhibit uniformly small $b_i$, as their weak alignment with
the random-feature subspace implies that $\mathbf{S}\mathbf{u}_i$ carries little
energy and is therefore minimally amplified by the resolvent.


\noindent\textbf{(II) Net per-mode response $\alpha_i$.}
Fig.~\ref{fig:alpha_vs_lambdaU} summarizes the net per-mode response $\alpha_i$
across the spectrum.
Mem modes are tightly concentrated near zero, and any isolated Mem modes with
$\alpha_i>0$ remain negligible due to their uniformly small visibility $a_i$.
In contrast, substantial positive responses occur almost exclusively within the
Gen subspace and increase with $\lambda_i(\mathbf{U})$.
Notably, the non-isotropic term is not uniformly enhancing within Gen:
modes near the lower edge of the Gen bulk are often suppressive ($\alpha_i<0$),
whereas the dominant positive contribution is carried by higher-curvature Gen
modes with larger $\lambda_i(\mathbf{U})$.
Consequently, even if some weaker Gen directions are attenuated, the effective
learning signal is governed by the high-eigenvalue Gen spectrum that encodes the
most consequential generative structure.
A more comprehensive analysis is provided in Appendix~\ref{sec:severe_mem}.

\noindent\textbf{(III) Global allocation of positive updates.}
To quantify how positive non-isotropic updates are distributed across subspaces,
we aggregate $\max(\alpha_i,0)$ and define
$
\mathrm{Share}_{\mathrm{mem}}^{+}
=
\frac{
\sum_{i\in\mathcal{I}_{\mathrm{mem}}}\max(\alpha_i,0)
}{
\sum_{i=1}^{p}\max(\alpha_i,0)
}$.
This metric captures the global allocation of positive curvature injection and is
robust to isolated atypical modes.
Empirically, we find $\mathrm{Share}_{\mathrm{mem}}^{+}\approx 9.5\times10^{-3}$,
indicating that nearly all positive non-isotropic updates are assigned to
generalization-associated directions.
Thus, at the level of global training dynamics, the non-isotropic component of
consistency distillation effectively shifts positive learning signal away from
memorization-associated modes.

\section{Conclusion and Discussion}

In summary, consistency distillation substantially reduces memorization in
diffusion models, including cases with strongly overfitted teacher models.
At the same time, sample quality is often preserved and can improve when the teacher exhibits a moderate level of memorization.
Our theoretical analysis suggests that this behavior arises because consistency
distillation actively reshapes the training geometry: it suppresses
memorization-associated directions while preserving generalization-relevant
updates, rather than passively inheriting teacher behavior.

\bibliographystyle{unsrtnat}
\bibliography{references}

\newpage
\appendix









\section{Rationale for Adopting Consistency Distillation}
\label{rational_cd}

\begin{figure}[t]
  \centering
  \begin{subfigure}[t]{0.23\textwidth}
    \centering
    \includegraphics[width=\linewidth]{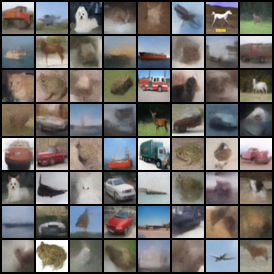}
    \caption{PD, 10\%-mem teacher}
  \end{subfigure}
  \hfill
  \begin{subfigure}[t]{0.23\textwidth}
    \centering
    \includegraphics[width=\linewidth]{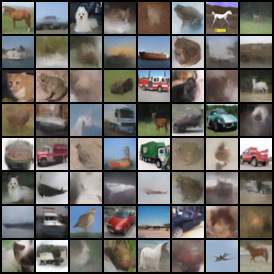}
    \caption{PD, 30\%-mem teacher}
  \end{subfigure}
  \hfill
  \begin{subfigure}[t]{0.23\textwidth}
    \centering
    \includegraphics[width=\linewidth]{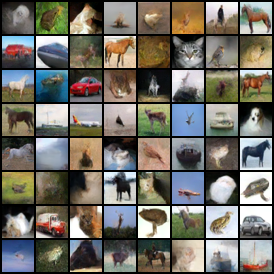}
    \caption{CD, 10\%-mem teacher}
  \end{subfigure}
  \hfill
  \begin{subfigure}[t]{0.23\textwidth}
    \centering
    \includegraphics[width=\linewidth]{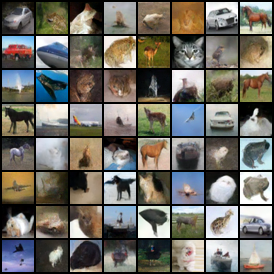}
    \caption{CD, 30\%-mem teacher}
  \end{subfigure}

  \caption{
    \textbf{Qualitative comparison between progressive distillation and consistency distillation under different teacher memorization levels trained on $6000$ data points}.
    PD samples exhibit noticeably degraded visual fidelity compared with CD,
    especially under limited data and higher teacher memorization.
  }
  \label{fig:pd_vs_cd_samples_6000data}
\end{figure}

\subsection{Motivation for Consistency Distillation}
\label{subsubsec:rationale_cd}

To motivate our choice of distillation framework, we first consider comparing with progressive distillation (PD)~\citep{salimans2022progressive}, a widely adopted approach for accelerating diffusion sampling.
PD iteratively reduces the number of sampling steps by training a student model to match the composition of multiple teacher DDIM steps, and has been shown to be effective in standard, data-rich regimes.
In each distillation stage, the student is initialized from the teacher and trained using a deterministic target constructed by composing two teacher DDIM transitions and analytically inverting a single student step.
This procedure is repeated while halving the sampling steps, yielding progressively faster samplers.

In these experiments, the teacher remains an EDM model trained on CIFAR-10, and we construct teachers with 10\% and 30\% memorization rates under the $l_2$ metric for both PD and CD distillation.
In our experiments, PD is implemented following the canonical discrete-time formulation.
The student time index is sampled from a fixed discrete grid, and training proceeds by matching one student step to two teacher steps.
We train PD models for $600{,}000$ iterations with batch size $64$, Adam optimization (weight decay $0$), gradient clipping $1.0$, and a linearly decayed learning rate.
Unless otherwise specified, we employ the perceptual LPIPS loss, which is commonly used to stabilize distillation under aggressive step reduction.

A critical hyperparameter in PD is the \emph{initial noise resolution}, controlled by \texttt{start\_scales}.
Setting \texttt{start\_scales=4096} corresponds to initializing distillation from a very fine-grained discretization of the diffusion process, i.e., a large number of teacher sampling steps.
This choice ensures that the initial teacher accurately resolves high-noise dynamics and provides well-defined multi-step trajectories for the student to imitate.
While such a setting is computationally demanding, it represents a favorable configuration for PD and is commonly adopted to avoid compounding discretization errors in early distillation stages.

\paragraph{Empirical fragility of PD under limited data and memorizing teachers.}
Despite this favorable configuration, PD exhibits limited robustness in the regime we consider.
With $6000$ training samples, the final PD model distilled from a $10\%$-memorization teacher attains an FID of $45.42$ with a memorization ratio of $0.79\%$.
When distilled from a $30\%$-memorization teacher, the final PD model attains an FID of $44.05$ with a memorization ratio of $3.74\%$.
These results indicate a substantial degradation in sample quality, even though the memorization ratios remain moderate.

This degradation is not merely quantitative.
As shown in left two panels of Fig.~\ref{fig:pd_vs_cd_samples_6000data}, PD samples under both $10\%$ and $30\%$ memorization teachers exhibit visibly reduced visual fidelity, including blurred structures and weakened object coherence.
Notably, this behavior persists despite careful hyperparameter choices and a large initial discretization scale, suggesting that PD is sensitive to the combined effects of limited data, teacher memorization, and aggressive step reduction.

\paragraph{Extension beyond trajectory-based distillation.}
While PD is the primary non-CD baseline in our study because it is the most directly comparable distillation method to CD, we also evaluate DMD to examine whether the memorization-suppression effect extends beyond this trajectory-based setting~\citep{dmd}.
We find that DMD likewise exhibits reduced memorization relative to the teacher, suggesting that the effect is not unique to CD.
At the same time, its raw sample quality is substantially worse.
To distinguish genuine memorization reduction from trivial degradation, we further refine both PD and DMD outputs using the same pretrained EDM prior by perturbing the one-step outputs to a high noise level and reconstructing them with the standard EDM PF-ODE solver~\citep{karras2022elucidating}.
After this prior-guided reconstruction, sample quality improves substantially for both methods, while memorization remains clearly below teacher levels.
These results suggest that the observed memorization reduction is not merely a byproduct of poor raw sample quality.

\begin{table}[t]
\centering
\caption{\textbf{Results for PD and DMD before and after EDM-based refinement under limited-data CIFAR-10 settings}. Refinement substantially improves sample quality while memorization remains well below teacher levels.}
\label{tab:pd_dmd_refine}
\begin{tabular}{llcccc}
\toprule
Method Group & Model & FID  & $l_2$ Mem  & SSCD Mem  & SSCD p95  \\
\midrule
\multirow{3}{*}{PD}
& Teacher    & 22.68 & 10.00\% & 26.93\% & 0.8586 \\
& PD         & 43.66 & 0.46\%  & 3.63\%  & 0.5664 \\
& PD+refine  & 10.88 & 0.00\%  & 0.32\%  & 0.4822 \\
\midrule
\multirow{3}{*}{DMD}
& Teacher    & 14.29 & 26.57\% & 49.12\% & 0.9220 \\
& DMD        & 46.91 & 3.93\%  & 7.17\%  & 0.6390 \\
& DMD+refine & 15.59 & 0.55\%  & 3.71\%  & 0.5617 \\
\bottomrule
\end{tabular}
\end{table}

\paragraph{Why we focus on consistency distillation.}
In contrast, consistency distillation demonstrates markedly stronger robustness in the same regime.
Rather than matching composed multi-step trajectories, consistency distillation trains the student to produce self-consistent predictions across neighboring noise levels.
This objective avoids explicit inversion of multi-step DDIM transitions and reduces the dependence on long teacher trajectories, which can be particularly fragile when the teacher exhibits memorization or when data is scarce.

Empirically, this robustness translates into a substantially improved quality--memorization tradeoff.
With $6000$ training samples, CD distilled from a $10\%$-memorization teacher achieves an FID of $21.19$ with a memorization ratio of $0.56\%$.
Under a $30\%$-memorization teacher, CD (two-step) achieves an FID of $20.60$ with a memorization ratio of $3.17\%$.
As illustrated in right two panels of Fig.~\ref{fig:pd_vs_cd_samples_6000data}, CD preserves significantly higher visual fidelity than PD under both teacher settings, even when the teacher itself exhibits substantial memorization.


\subsection{Motivation for the Discrete Formulation of Consistency Distillation}

\begin{figure}[t]
  \centering
  \begin{subfigure}[t]{0.23\textwidth}
    \includegraphics[width=\linewidth]{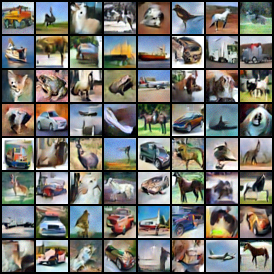}
    \caption{Continuous CD, 10\%-mem teacher}\label{10-continue}
  \end{subfigure}
  \hfill
  \begin{subfigure}[t]{0.23\textwidth}
    \includegraphics[width=\linewidth]{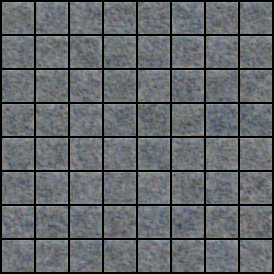}
    \caption{Continuous CD, 30\%-mem teacher}\label{30-continue}
  \end{subfigure}
  \hfill
  \begin{subfigure}[t]{0.23\textwidth}
    \includegraphics[width=\linewidth]{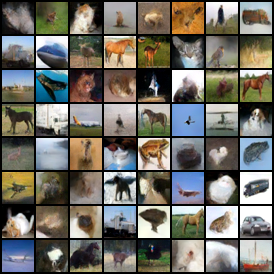}
    \caption{Discrete CD, 10\%-mem teacher}\label{10-lisan}
  \end{subfigure}
  \hfill
  \begin{subfigure}[t]{0.23\textwidth}
    \includegraphics[width=\linewidth]{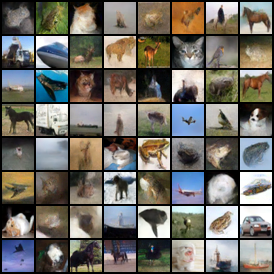}
    \caption{Discrete CD, 30\%-mem teacher}\label{30-lisan}
  \end{subfigure}

  \caption{
    \textbf{Comparison between continuous-time and discrete consistency distillation under limited data}.
    Continuous-time CD exhibits blurred or failed generations, while the discrete objective remains stable under the same training budget.
  }
  \label{fig:cd_continuous_vs_discrete_5000data}
\end{figure}

The original formulation of consistency distillation admits a continuous-time extension, obtained as the infinite-step limit of the discrete objective~\citep{pmlr-v202-song23a}.
Under suitable smoothness assumptions on the consistency function, the metric, and the teacher score, the rescaled discrete consistency loss converges to a continuous-time objective defined along the probability flow ODE.
In the commonly used stop-gradient setting, this limit yields a \emph{pseudo-objective} whose gradient matches that of the discrete loss as the number of time steps tends to infinity.

Concretely, letting $f_\theta(x_t, t)$ denote the student consistency model and $s_\phi(x_t, t)$ the teacher score, the continuous-time consistency distillation objective takes the form
\begin{equation}
\label{eq:cd_continuous_loss}
\mathcal{L}_{\mathrm{CD}}^{\mathrm{cont}}(\theta)
\;=\;
\mathbb{E}_{x \sim p_{\mathrm{data}},\, t}
 \left[
\lambda(t)\,
\Big\|
\partial_t f_\theta(x_t, t)
-
t\, \nabla_x f_\theta(x_t, t)\, s_\phi(x_t, t)
\Big\|^2
\right],
\end{equation}
where $x_t \sim \mathcal{N}(x, t^2 I)$ and $\lambda(t)$ is a bounded weighting function.
This objective is minimized if and only if the student model matches the ground-truth consistency function induced by the teacher probability flow.

While Eq.~\eqref{eq:cd_continuous_loss} provides a principled characterization of consistency distillation in the continuous-time limit, it introduces nontrivial practical challenges.
At finite training resolution, optimizing this objective requires implicitly estimating directional derivatives of the student network along the teacher-induced flow, which involves Jacobian--vector products.
Such derivative-based signals are sensitive to model curvature and to noise in the teacher score, and this sensitivity is amplified when the teacher exhibits memorization.

These effects become particularly pronounced in low-data regimes.
With limited data, the student observes fewer distinct diffusion trajectories, and the continuous-time objective aggregates derivative information along these trajectories, increasing variance and compounding optimization noise.
Empirically, this leads to unstable optimization behavior: under $5000$ training samples, the continuous-time consistency objective already produces noticeably blurred generations under a $10\%$-memorization teacher in Fig.~\ref{10-continue}, and fails to generate coherent samples under a $30\%$-memorization teacher in Fig.~\ref{30-continue}.

In contrast, the discrete consistency distillation objective enforces consistency through explicit, finite differences between model predictions at neighboring noise levels.
Each update depends only on forward evaluations of the student model at a small number of discrete time points, avoiding the need to estimate derivatives along the probability flow.
As a result, the discrete objective is less sensitive to high-curvature, memorization-associated directions in the teacher dynamics.
Under identical data scale, architecture, and training budget, the discrete formulation yields stable and visually coherent samples for both $10\%$ and $30\%$-memorization teachers in Figs.~\ref{10-lisan} and~\ref{30-lisan}.

\section{Detailed Experimental Setup}
\label{detail_exp}

\subsection{Additional Experimental Details for CIFAR-10 and ImageNet}
\label{appendix_training_details}

\paragraph{Backbone architecture.}
For CIFAR-10, we use the NCSN++ backbone~\citep{song2021scorebased}, following
the standard architectural choice in prior consistency distillation work~\citep{pmlr-v202-song23a}.
For ImageNet, we use the class-conditional DDPM++ architecture~\citep{dhariwal2021diffusion}, consistent with the ImageNet
setup in Consistency Models~\citep{pmlr-v202-song23a}.
In both cases, we employ the same backbone family for the EDM teachers and the
corresponding consistency models, so architectural differences do not confound
comparisons.

\paragraph{Consistency model parameterization and boundary condition.}
We represent a consistency model as
\begin{equation}
f_\theta(x,t) = c_{\mathrm{skip}}(t)\,x + c_{\mathrm{out}}(t)\,F_\theta(x,t),
\end{equation}
where the coefficients are chosen to satisfy the boundary constraint at the
minimum time $\varepsilon$.
Using $\sigma_{\mathrm{data}}=0.5$, we set
\begin{equation}
c_{\mathrm{skip}}(t)
=
\frac{\sigma_{\mathrm{data}}^2}{(t-\varepsilon)^2+\sigma_{\mathrm{data}}^2},
\qquad
c_{\mathrm{out}}(t)
=
\frac{\sigma_{\mathrm{data}}(t-\varepsilon)}{\sqrt{\sigma_{\mathrm{data}}^2+t^2}},
\end{equation}
which guarantees $c_{\mathrm{skip}}(\varepsilon)=1$ and
$c_{\mathrm{out}}(\varepsilon)=0$.
This follows the consistency-model parameterization in~\citep{pmlr-v202-song23a}
and ensures the required boundary condition when $\varepsilon>0$.

\paragraph{Consistency distillation and optimization.}
For consistency distillation, the student network is initialized from the
corresponding pretrained EDM weights.
Training uses Rectified Adam (RAdam) with no warm-up, no learning-rate decay,
and no weight decay.
We maintain an exponential moving average (EMA) of the online model parameters,
consistent with the setup in~\citep{pmlr-v202-song23a,karras2022elucidating}.
No data augmentation or additional regularization is used, in order to avoid potential confounding effects in the assessment of memorization~\citep{gu2025on}.

\paragraph{Schedules for consistency training.}
When training consistency models, we use a time-step schedule $N(k)$ and an EMA
schedule $\mu(k)$ of the form
\begin{align}
N(k)
&=
\Big\lfloor
\frac{c\,k}{K}\bigl((s_1+1)^2-s_0^2\bigr) + s_0^2
\Big\rfloor - 1,
\\
\mu(k)
&=
\exp \left(
\frac{s_0 \log \mu_0}{N(k)}
\right),
\end{align}
where $k\in\{1,\dots,K\}$ indexes training iterations, $K$ is the total number
of iterations, $s_0$ and $s_1$ are the starting and ending discretization
budgets, and $\mu_0$ is the initial EMA decay factor.

\subsection{Additional Experimental Details for Stable Diffusion v1.5}
\label{appendix_sd15_details}

\paragraph{Latent-space formulation.}
Following Latent Consistency Models~\citep{luo2023latent}, we perform consistency
distillation in the latent space rather than in pixel space.
Let $x$ denote an image and $c$ its text prompt.
Using the pretrained VAE encoder of Stable Diffusion v1.5, we first map the image into latent space,
\begin{equation}
z = E(x),
\end{equation}
where $E(\cdot)$ is the fixed encoder.
Distillation is then performed entirely in latent space rather than pixel space.

Following the standard latent diffusion parameterization, a noisy latent at time $t$ is written as
\begin{equation}
z_t = \alpha_t z + \sigma_t \epsilon, \qquad \epsilon \sim \mathcal{N}(0,I),
\end{equation}
where $(\alpha_t,\sigma_t)$ are determined by the diffusion noise schedule.

\paragraph{Consistency model parameterization.}
We parameterize the latent consistency model as
\begin{equation}
f_{\theta}(z_t,c,t)
=
c_{\mathrm{skip}}(t)\, z_t
+
c_{\mathrm{out}}(t)
\left(
\frac{z_t-\sigma_t \hat{\epsilon}_{\theta}(z_t,c,t)}{\alpha_t}
\right),
\label{eq:sd15_lcm_param}
\end{equation}
where $\hat{\epsilon}_{\theta}$ is the student noise predictor.
This is the standard $\epsilon$-prediction parameterization used in LCM-style distillation for latent diffusion models.
In our experiments, the student is initialized from the fine-tuned Stable Diffusion v1.5 teacher before consistency distillation.

\paragraph{Teacher trajectory construction.}
To construct distillation targets, we follow the official LCM implementation and approximate the PF-ODE trajectory of the teacher with a DDIM-based solver.
Given a discretization pair $(t_{n+1}, t_n)$ on a 50-step DDIM grid, we first sample
\begin{equation}
z_{t_{n+1}} = \alpha_{t_{n+1}} z + \sigma_{t_{n+1}} \epsilon,
\qquad \epsilon \sim \mathcal{N}(0,I),
\end{equation}
and then obtain the previous latent by a DDIM ODE step,
\begin{equation}
\hat{z}^{\Psi}_{t_n}
=
\Psi_{\mathrm{DDIM}}(z_{t_{n+1}}, t_{n+1}, t_n, c),
\label{eq:sd15_ddim_solver}
\end{equation}
where $\Psi_{\mathrm{DDIM}}$ denotes the DDIM-based solver applied to the teacher model.
Thus, the target is defined by an earlier point on the same reverse-time teacher trajectory, rather than by matching the teacher prediction at exactly the same noisy input.

\paragraph{Consistency distillation objective.}
Let $\theta^{-}$ denote the EMA parameters of the student.
We optimize the latent consistency objective
\begin{equation}
\mathcal{L}_{\mathrm{LCD}}
=
\mathbb{E}_{z,c,n,\epsilon}
\left[
\ell_{\mathrm{Huber}}
\Big(
f_{\theta}(z_{t_{n+1}},c,t_{n+1}),
\,
f_{\theta^{-}}(\hat{z}^{\Psi}_{t_n},c,t_n)
\Big)
\right],
\label{eq:sd15_lcd_loss}
\end{equation}
where $\ell_{\mathrm{Huber}}(\cdot,\cdot)$ is the Huber loss.
No auxiliary reconstruction, perceptual, or adversarial losses are introduced.

\paragraph{Components kept fixed.}
Throughout the Stable Diffusion v1.5 experiments, the VAE, text encoder, tokenizer, and prompt-conditioning pipeline are inherited directly from the pretrained backbone.
We do not modify the conditioning architecture or introduce prompt engineering specifically designed to amplify memorization.
Across different COCO mixing ratios, the backbone architecture, latent consistency distillation setup, solver choice for teacher target construction, and 4-step evaluation protocol are all kept fixed.
This ensures that the main source of variation is the memorization pressure induced during teacher fine-tuning, followed by the same consistency distillation procedure.

\subsection{Experimental Setup in Section~\ref{sec:theory}}
\label{theory_exp}
\subsubsection{Ridge Regularization in Non-Isotropic Consistency Distillation}
\label{subsec:ridge_ucd}

Under consistency distillation, the local probability-flow ODE induces a
non-isotropic response operator of the form
\begin{equation}
\label{eq:A_operator}
\mathbf{A}
=
\mathbf{S}
-
\mu_1^2\,
\mathbf{S}\,(\mathbf{U}+\gamma \mathbf{I})^{-1}\,\mathbf{S},
\end{equation}
where \(\mathbf{U} \in \mathbb{R}^{p\times p}\) denotes the teacher curvature matrix,
\(\mathbf{S} = \frac{1}{d}\mathbf{W}\mathbf{W}^\top\) is the random feature covariance,
and \(\mu_1 = \mathbb{E}[\sigma'(Z)]\).
The inverse operator \((\mathbf{U}+\gamma \mathbf{I})^{-1}\) arises unavoidably from the PF-ODE
linearization and governs how teacher curvature directions are transferred
to the student.

Empirically and theoretically, the spectrum of the teacher curvature \(\mathbf{U}\)
is highly ill-conditioned: a large fraction of its eigenvalues concentrate
near zero, corresponding to memorization-dominated directions, while a
small number of large eigenvalues correspond to generalization-relevant modes.
Denoting the eigendecomposition \(\mathbf{U} = \mathbf{V} \boldsymbol{\Lambda} \mathbf{V}^\top\),
the inverse curvature weights each mode by \((\lambda_k+\gamma)^{-1}\).
Without ridge regularization (\(\gamma=0\)), this induces an extreme amplification
of small-eigenvalue directions,
\begin{equation}
(\mathbf{U})^{-1}
=
\sum_{k=1}^p \frac{1}{\lambda_k} \mathbf{v}_k \mathbf{v}_k^\top,
\end{equation}
causing memorization subspaces to dominate the response even when the input
direction itself lies in a generalization-relevant region.
This phenomenon is not a benign numerical artifact.
In the non-isotropic response energy
\begin{equation}
\label{eq:bi_def}
b_i
=
(\mathbf{S}\mathbf{u}_i)^\top (\mathbf{U}+\gamma \mathbf{I})^{-1} (\mathbf{S}\mathbf{u}_i),
\end{equation}
where \(\mathbf{u}_i\) is an eigenvector of \(\mathbf{U}\),
the contribution from memorization directions can overwhelm that from
generalization directions purely due to inverse spectral weighting.
As a result, response ratios
\(
r_i = \mu_1^2 b_i / (a_i + \varepsilon)
\)
become large even for modes associated with large \(\lambda(\mathbf{U})\),
leading to spurious over-subtraction signals.

To make this effect explicit, decompose
\(
\mathbf{S}\mathbf{u}_i = \sum_{k=1}^p y_{k i} \mathbf{v}_k
\)
in the eigenbasis of \(\mathbf{U}\).
Then
\begin{equation}
b_i
=
\sum_{k=1}^p \frac{y_{k i}^2}{\lambda_k+\gamma}.
\end{equation}
Let \(\mathcal{M}\) denote the memorization subspace, defined by
small eigenvalues of \(\mathbf{U}\).
We define the inverse-weighted memorization leakage as
\begin{equation}
\label{eq:fracBmem}
\mathrm{fracBmem}_i
=
\frac{
\sum_{k\in\mathcal{M}} \frac{y_{k i}^2}{\lambda_k+\gamma}
}{
\sum_{k=1}^p \frac{y_{k i}^2}{\lambda_k+\gamma}
}.
\end{equation}
This quantity directly measures how much of the PF-ODE response energy
of mode \(i\) originates from memorization directions under the inverse
curvature metric.
Without ridge regularization, \(\mathrm{fracBmem}_i \approx 1\) even for
generalization modes, indicating severe cross-subspace leakage.
This behavior invalidates a naive interpretation of the non-isotropic
response as purely suppressive or amplifying.

Introducing a ridge term \(\gamma > 0\) modifies the inverse curvature to
\((\mathbf{U}+\gamma \mathbf{I})^{-1}\), which has two principled effects:
\begin{enumerate}[leftmargin=*,nosep]
\item It bounds the maximum amplification of small-eigenvalue directions,
preventing memorization modes from dominating the response.
\item It restores a meaningful separation between memorization and
generalization subspaces by suppressing inverse-weighted leakage.
\end{enumerate}
Importantly, ridge regularization does not alter the qualitative structure
of the PF-ODE operator; it only controls the conditioning of the inverse
curvature, which is unavoidable in consistency distillation.

\begin{figure}[t]
  \centering
  \includegraphics[width=0.45\linewidth]{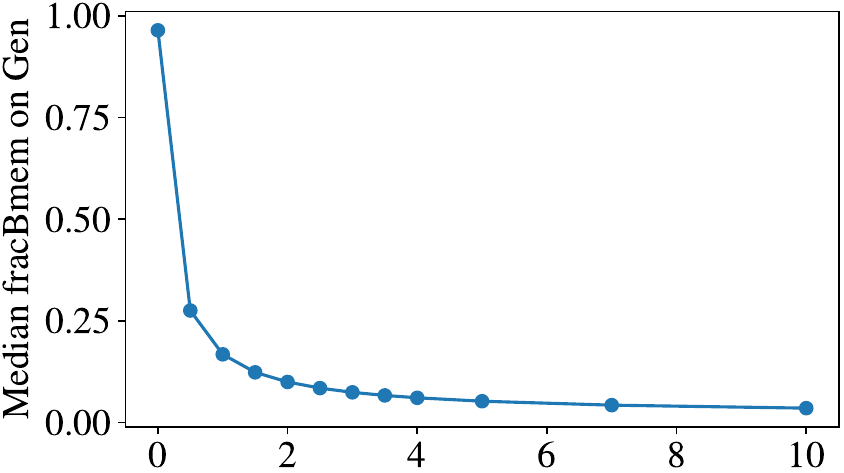}
  \caption{\textbf{Gen-to-Mem leakage after inverse-curvature weighting.}
  Median \(\mathrm{fracBmem}\) on Gen (Eq.~\eqref{eq:fracBmem}) versus ridge \(\gamma\).
  We choose \(\gamma^\star\) by the minimal-sufficient rule in Eq.~\eqref{eq:gamma_star_def}
  with tolerance \(\tau=0.1\).}
  \label{fig:ridge_sweep_fracBmem}
\end{figure}

Rather than selecting \(\gamma\) to enforce a particular sign of the response,
we adopt a geometrically meaningful criterion:
\begin{equation}
\mathrm{median}_{i \in \mathrm{GEN}}
\bigl[
\mathrm{fracBmem}_i
\bigr]
\;\le\; \tau,
\end{equation}
where \(\mathrm{GEN}\) denotes the generalization subspace and \(\tau\)
is a small constant (e.g., \(0.1\) or \(0.2\)).
This criterion ensures that, for generalization modes,
the PF-ODE response is not dominated by memorization directions.

Fig.~\ref{fig:ridge_sweep_fracBmem} reports the ridge sweep of the proposed
leakage statistic \(\mathrm{fracBmem}\) (Eq.~\eqref{eq:fracBmem}) on the
generalization subspace:
\(
\gamma \mapsto
\mathrm{median}_{i\in \mathrm{GEN}}\big[\mathrm{fracBmem}_i(\gamma)\big].
\)
As \(\gamma\) increases, the inverse-curvature weighting
\((\lambda+\gamma)^{-1}\) caps the amplification of small-eigenvalue directions,
yielding a monotone reduction of memorization leakage into GEN after applying
\((\mathbf{U}+\gamma \mathbf{I})^{-1}\).

To make the ridge choice reproducible, we select the \emph{minimal sufficient}
regularization level \(\gamma^\star\) that enforces a target leakage tolerance
\(\tau\):
\begin{equation}
\label{eq:gamma_star_def}
\gamma^\star
=
\min\Bigl\{\gamma>0:\;
\mathrm{median}_{i\in \mathrm{GEN}}\big[\mathrm{fracBmem}_i(\gamma)\big]
\le \tau
\Bigr\}.
\end{equation}
In our experiments, setting \(\tau=0.1\) yields \(\gamma^\star \approx 2.0\),
since the GEN median leakage drops from \(0.123\) at \(\gamma=1.5\) to
\(0.0997\) at \(\gamma=2.0\), while larger ridge values produce diminishing
returns in leakage reduction.
This choice controls cross-subspace contamination induced by the inverse
curvature metric, without tuning \(\gamma\) to force a particular sign pattern
of the response.

Fig.~\ref{fig:ridge_sweep_fracBmem} further shows that the unregularized operator
(\(\gamma=0\)) yields near-total GEN-to-MEM leakage after
\((\mathbf{U}+\gamma \mathbf{I})^{-1}\).
Increasing \(\gamma\) rapidly suppresses this effect; beyond \(\gamma \approx 2\),
the leakage curve flattens, indicating that \(\gamma^\star\) captures the main
conditioning benefit while avoiding excessive attenuation of the overall
non-isotropic structure.

\subsubsection{Numerical Setup for Computing the Spectrum of $\mathbf{U}_{\mathrm{cd}}$}

We compute the empirical spectrum of the one-step consistency distillation
curvature operator under the Gaussian-equivalent RFNN model described in
Section~\ref{sec:theory}.
All reported metrics are evaluated for the full one-step curvature
$\mathbf{U}_{\mathrm{cd}}$, which admits the small-step approximation
\[
\mathbf{U}_{\mathrm{cd}}
=
\Delta t^2\, a_1(t')^2
\Bigl(
\mathbf{S}
-
\mu_1(t')^2\, \mathbf{S}\mathbf{U}^{-1}\mathbf{S}
\Bigr)
+
\beta(t',\Delta t)\,\mathbf{I}.
\]

All experiments are conducted with ambient dimension fixed to $d=100$.
The number of random features and effective training samples scale linearly with
$d$ as $p=\psi_p d$ and $n=\psi_n d$, where we use
$\psi_p=32$ and $\psi_n=4$ throughout, corresponding to $p=3200$ and $n=400$.
Curvature metrics are evaluated at a fixed diffusion time $t'=10^{-2}$ under the
OU forward process, and the consistency distillation update is approximated using
a single Euler step with step size $\Delta t=10^{-3}$.
The RFNN uses the $\tanh(\cdot)$ activation function, and the data covariance is
assumed isotropic, $\Sigma=\mathbf{I}_d$, corresponding to
$\rho_\Sigma(\lambda)=\delta(\lambda-1)$.

Teacher-dependent constants $(a_t,b_t,v_t,s_t^2)$ are estimated via Monte Carlo
sampling under the OU forward process using $2\times10^5$ samples.
The Gaussian-equivalent curvature matrices are then constructed as
\[
\mathbf{S} = \frac{1}{d}\mathbf{W}\mathbf{W}^\top,
\qquad
\mathbf{U} = \frac{1}{n}\mathbf{G}\mathbf{G}^\top + b_t^2 \mathbf{S} + s_t^2 \mathbf{I},
\]
where $\mathbf{W}\in\mathbb{R}^{p\times d}$ and the auxiliary random matrices used
to form $\mathbf{G}$ have i.i.d.\ standard Gaussian entries.
To avoid trace-based closure approximations, PF-ODE constants are estimated
directly by sampling $x\sim p_{t'}$ and computing
$\eta=\mathbb{E}[x^\top s_\phi(x)]/d$ and
$\upsilon=\mathbb{E}[\|s_\phi(x)\|_2^2]/d$ using $5\times10^4$ Monte Carlo samples.
These estimates define $\gamma=1+\eta$ and $\kappa^2=1+2\eta+\upsilon$, which are
used to compute the closed-form coefficients $a_1(t')$ and $a_0(t')$ appearing in
$\mathbf{U}_{\mathrm{cd}}$.

The eigenvalue spectrum of $\mathbf{U}_{\mathrm{cd}}$ is computed via dense
eigendecomposition.
To isolate the continuous spectral component, eigenvalues below
$\varepsilon_{\mathrm{atom}}=10^{-50}$ are discarded, and all reported histograms
and summary statistics are computed over the remaining non-zero spectral support.

\subsubsection{Numerical setup for RFNN-based non-isotropic CD diagnostics}
\label{appendix_rfnn_noniso_numeric_setup}

We describe the numerical setup used to evaluate RFNN-based diagnostics for the
non-isotropic one-step consistency distillation operator.

All experiments are conducted under isotropic data covariance
$\rho_{\Sigma}(\lambda)=\delta(\lambda-1)$ with input dimension fixed to $d=100$.
The random feature and sample dimensions scale linearly with $d$ as
$p=\psi_p d$ and $n=\psi_n d$, where we use $\psi_p=32$ and $\psi_n=4$ throughout,
corresponding to $p=3200$ and $n=400$.
We consider the OU forward process at a fixed diffusion time $t'=0.01$ and use a
single Euler step of size $\Delta t=10^{-3}$ in all reported experiments.

Random features are constructed by sampling
$\mathbf{W}\in\mathbb{R}^{p\times d}$ with i.i.d.\ $\mathcal{N}(0,1)$ entries and
setting $\mathbf{B}=\mathbf{W}/\sqrt d$, yielding the metric
$\mathbf{S}=\mathbf{B}\mathbf{B}^\top$.
Teacher-dependent constants $(a_t,b_t,v_t,s_t^2)$ and
$\mu_1=\mathbb{E}[\sigma'(Z)]$ for $\sigma=\tanh$ are estimated via Monte Carlo
sampling using $5\times10^5$ samples.
The Gaussian-equivalent teacher curvature is constructed as
\[
\mathbf{U}
=
\frac{1}{n}\mathbf{G}\mathbf{G}^\top
+
b_t^2 \mathbf{S}
+
s_t^2 \mathbf{I}_p,
\qquad
\mathbf{G}
=
e^{-t'} a_t\,(\mathbf{B}\mathbf{X}')
+
v_t\,\boldsymbol{\Omega},
\]
where $\mathbf{X}'\in\mathbb{R}^{d\times n}$ and
$\boldsymbol{\Omega}\in\mathbb{R}^{p\times n}$ have i.i.d.\ standard Gaussian
entries.
The non-isotropic channel operator is defined as
\[
\mathbf{A}
=
\mathbf{S}
-
\mu_1^2\,\mathbf{S}(\mathbf{U}+\gamma\mathbf{I})^{-1}\mathbf{S},
\]
with ridge parameter $\gamma=2$ used throughout to ensure numerical stability.

Diagnostics are evaluated along the eigenmodes
$\{\mathbf{u}_i\}_{i=1}^p$ of $\mathbf{U}$.
Memorization- and generalization-associated modes are separated using a fixed
spectral threshold $\lambda_{\mathrm{th}}=2$, with
$\lambda_i(\mathbf{U})<\lambda_{\mathrm{th}}$ classified as Mem and
$\lambda_i(\mathbf{U})\ge\lambda_{\mathrm{th}}$ as Gen.
For each mode we compute
\[
a_i=\mathbf{u}_i^\top \mathbf{S}\mathbf{u}_i,\quad
b_i=\mathbf{u}_i^\top \mathbf{S}(\mathbf{U}+\gamma\mathbf{I})^{-1}\mathbf{S}\mathbf{u}_i,\quad
\alpha_i=a_i-\mu_1^2 b_i.
\]

\section{Proofs}
\label{proof_all}

\subsection{Assumptions}
\label{assumption_appendix}

\begin{assumption}[Gaussian-equivalent random features]
\label{ass:gaussian_equivalence}

$\mathbf{W}\in\mathbb R^{p\times d}$ have i.i.d.\ $\mathcal N(0,1)$ entries and define
$
g(\mathbf{x})=\frac{\mathbf{W}\mathbf{x}}{\sqrt d}$,
$h(\mathbf{x})=\sigma(g(\mathbf{x}))$.
For a given perturbation $\boldsymbol{\delta x}\in\mathbb R^d$, set
$
\Delta g=g(\mathbf{x}+\boldsymbol{\delta x})-g(\mathbf{x})
=\frac{\mathbf{W}\boldsymbol{\delta x}}{\sqrt d}$, $\Delta h=h(\mathbf{x})-h(\mathbf{x}+\boldsymbol{\delta x})$ and
$\mathbf{w}_i^\top$ denote the $i$-th row of $\mathbf{W}$.
Conditionally on $(\mathbf{x},\boldsymbol{\delta x})$ and with respect to the randomness of
$\mathbf{w}_i\sim\mathcal N(0,I_d)$, the coordinate pair
$
(g_i,\Delta g_i)
=
\left(
\frac{\mathbf{w}_i^\top \mathbf{x}}{\sqrt d},
\frac{\mathbf{w}_i^\top \boldsymbol{\delta x}}{\sqrt d}
\right)$
is centered jointly Gaussian with
$\Gamma_d^2 = \mathrm{Var}(g_i\mid   \mathbf{x}) = \frac{\|\mathbf{x}\|_2^2}{d}$,
$\Delta_d^2 = \mathrm{Var}(\Delta g_i\mid  \mathbf{x},\boldsymbol{\delta x}) = \frac{\|\boldsymbol{\delta x}\|_2^2}{d}$,
$c_d = \mathrm{Cov}(g_i,\Delta g_i\mid  \mathbf{x},\boldsymbol{\delta x}) = \frac{\mathbf{x}^\top \boldsymbol{\delta x}}{d}$.
Moreover, the pairs $\{(g_i,\Delta g_i)\}_{i=1}^p$ are i.i.d.\ across $i$.
\end{assumption}

\begin{assumption}[Activation moment and smoothness conditions]
\label{ass:activation_moment}

The activation $\sigma:\mathbb R\to\mathbb R$ is measurable and satisfies
$
\mathbb E_\zeta \left[
(\sigma(g_i)-\sigma(g_i+\Delta g_i))^2
\;\middle|\;
\mathbf{x},\boldsymbol{\delta x}
\right]
<\infty $.
In addition, $\sigma$ is almost everywhere differentiable and
$
\mathbb E_{G\sim\mathcal N(0,1)} \big[\sigma'(G)^2\big]<\infty$.
For sharper control of higher-order terms, we assume
$\sigma\in C^2$ with
$\mathbb E_{G\sim\mathcal N(0,1)} \big[\sigma''(G)^2\big]<\infty$.
\end{assumption}

\begin{assumption}[Small-noise one-step regime]
\label{ass:small_noise}

As $d\to\infty$ and $\Delta t\to 0$, the perturbation satisfies
$
\frac{\|\boldsymbol{\delta x}\|_2^2}{d}\longrightarrow 0$,
$\Gamma_d^2=\frac{\|\mathbf{x}\|_2^2}{d}\longrightarrow \Gamma^2\in(0,\infty)$,
so that
$
\Delta_d^2=\mathrm{Var}(\Delta g_i)=\frac{\|\boldsymbol{\delta x}\|_2^2}{d}\longrightarrow 0$.
In the isotropic setting $\boldsymbol{\Sigma}=I_d$ and for the one-step
OU probability flow update,
we further assume the existence of deterministic limits
$
\eta(t')=\lim_{d\to\infty}\frac{\mathbf{x}^\top \mathbf{s}_\phi(\mathbf{x},t')}{d}$ and
$\upsilon(t')=\lim_{d\to\infty}\frac{\|\mathbf{s}_\phi(\mathbf{x},t')\|_2^2}{d}$ holding in probability.
Consequently, we have
\[
\Gamma_d^2\to 1,
c_d=\frac{\mathbf{x}^\top\boldsymbol{\delta x}}{d}\to -\Delta t\,\gamma(t'),
\Delta_d^2\to \Delta t^2\,\kappa(t')^2,
\]
where
$
\gamma(t')=1+\eta(t'),
\kappa(t')^2=1+2\eta(t')+\upsilon(t')$.
\end{assumption}

\subsection{Proof of Lemma~\ref{lem:orth_decomp_delta_h}}
\label{lemma_proof}
\label{subsec:delta_h_generic}

\begin{proof}
Fix $(\mathbf{x},\boldsymbol{\delta x})\in\mathbb R^d\times\mathbb R^d$.
Let $\mathbf{w}\sim\mathcal N(0,\mathbf{I}_d)$ and define the single-coordinate Gaussian pair
\[
G=\frac{\mathbf{w}^\top \mathbf{x}}{\sqrt d},
\qquad
\Delta G=\frac{\mathbf{w}^\top \boldsymbol{\delta x}}{\sqrt d}.
\]
By Assumption~\ref{ass:gaussian_equivalence}, conditionally on $(\mathbf{x},\boldsymbol{\delta x})$,
$(G,\Delta G)$ is centered jointly Gaussian with
\[
\mathrm{Var}(G)=\Gamma_d^2,\qquad
\mathrm{Var}(\Delta G)=\Delta_d^2,\qquad
\mathrm{Cov}(G,\Delta G)=c_d.
\]
Let $Y=\sigma(G)-\sigma(G+\Delta G)$ and $X=\Delta G$.
By Assumption~\ref{ass:activation_moment}, $Y\in L^2$ and $X\in L^2$.
Assume $\Delta_d^2>0$ so that $\mathbb E[X^2\mid \mathbf{x},\boldsymbol{\delta x}]=\Delta_d^2>0$.

Define the projection coefficient
\[
a_1(\mathbf{x},\boldsymbol{\delta x})=\frac{\mathbb E[Y X\mid \mathbf{x},\boldsymbol{\delta x}]}{\mathbb E[X^2\mid \mathbf{x},\boldsymbol{\delta x}]},
\qquad
R=Y-a_1(\mathbf{x},\boldsymbol{\delta x})\,X.
\]
Then $a_1(\mathbf{x},\boldsymbol{\delta x})$ is the $L^2$-projection coefficient of $Y$ onto $\mathrm{span}\{X\}$, hence
\begin{equation}
\mathbb E[RX\mid \mathbf{x},\boldsymbol{\delta x}]=0.
\label{eq:R_orth}
\end{equation}
Moreover,
\begin{align}
\mathbb E[R^2\mid \mathbf{x},\boldsymbol{\delta x}]
&=
\mathbb E\big[(Y-a_1X)^2\mid \mathbf{x},\boldsymbol{\delta x}\big]\notag\\
&=
\mathbb E[Y^2\mid \mathbf{x},\boldsymbol{\delta x}]
-a_1(\mathbf{x},\boldsymbol{\delta x})^2\,\mathbb E[X^2\mid \mathbf{x},\boldsymbol{\delta x}]\notag\\
&=
a_0(\mathbf{x},\boldsymbol{\delta x})\,\mathbb E[X^2\mid \mathbf{x},\boldsymbol{\delta x}],
\label{eq:R2_def}
\end{align}
where $a_0(\mathbf{x},\boldsymbol{\delta x})$ is defined in Lemma~\ref{lem:orth_decomp_delta_h}.
Now lift from a single coordinate to the full vector.
For each $i\in\{1,\dots,p\}$, let $(g_i,\Delta g_i)$ be i.i.d.\ copies of $(G,\Delta G)$
under the conditional law induced by $\mathbf{w}_i\sim\mathcal N(0,\mathbf{I}_d)$, and set
\[
\Delta h_i=\sigma(g_i)-\sigma(g_i+\Delta g_i),
\qquad
R_i=\Delta h_i-a_1(\mathbf{x},\boldsymbol{\delta x})\,\Delta g_i.
\]
Writing $\Delta \mathbf{h}=a_1\Delta \mathbf{g}+ \mathbf{R}$ and expanding second moments gives
\[
\mathbb E[\Delta \mathbf{h}\Delta \mathbf{h}^\top   \mid  \mathbf{x},\boldsymbol{\delta x}]
 = 
a_1^2\,\mathbb E[\Delta \mathbf{g}\Delta \mathbf{g}^\top   \mid  \mathbf{x},\boldsymbol{\delta x}]
 + 
a_1\,\mathbb E[\Delta \mathbf{g}\mathbf{R}^\top   \mid  \mathbf{x},\boldsymbol{\delta x}]
 + 
a_1\,\mathbb E[\mathbf{R}\Delta \mathbf{g}^\top   \mid  \mathbf{x},\boldsymbol{\delta x}]
 + 
\mathbb E[\mathbf{R}\mathbf{R}^\top     \mid  \mathbf{x},\boldsymbol{\delta x}].
\]
By (\ref{eq:R_orth}) applied coordinate-wise and independence across $i$,
the cross terms vanish:
$\mathbb E[\Delta \mathbf{g}\mathbf{R}^\top   \mid  \mathbf{x},\boldsymbol{\delta x}]=0$ and $\mathbb E[\mathbf{R}\Delta \mathbf{g}^\top  \mid  \mathbf{x},\boldsymbol{\delta x}]=0$.
Furthermore, since the coordinates are i.i.d.\ and $R_i$ has conditional second moment
$\mathbb E[R_i^2\mid \mathbf{x},\boldsymbol{\delta x}]=\mathbb E[R^2\mid \mathbf{x},\boldsymbol{\delta x}]$,
we have $\mathbb E[\mathbf{R}\mathbf{R}^\top   \mid  \mathbf{x},\boldsymbol{\delta x}]=\mathbb E[R^2\mid \mathbf{x},\boldsymbol{\delta x}]\;\mathbf{I}_p$.
Using (\ref{eq:R2_def}) and $\mathbb E[X^2\mid  \mathbf{x},\boldsymbol{\delta x}]=\mathbb E[\Delta g_i^2\mid \mathbf{x},\boldsymbol{\delta x}]$
yields the exact decomposition (\ref{eq:delta_h_second_moment_decomp}).

Assume now the regime of Assumption~\ref{ass:small_noise} with $\boldsymbol{\Sigma}=\mathbf{I}_d$ and the one-step PF-ODE update (\ref{eq:pf_ode_ou}).
Let $G_d=G/\Gamma_d$ so that $G_d\sim\mathcal N(0,1)$.
Define
$
Z=\frac{\Delta G}{\Delta t}$,
we have $\Delta G=\Delta tZ$.
By Assumption~\ref{ass:small_noise}, the joint Gaussian parameters satisfy
\[
\Gamma_d^2\to 1,
\qquad
\mathrm{Var}(Z)=\frac{\Delta_d^2}{\Delta t^2}\to \kappa(t')^2,
\qquad
\mathrm{Cov}(G,Z)=\frac{c_d}{\Delta t}\to \gamma(t').
\]
Hence $(G,Z)$ converges in distribution to a centered jointly Gaussian pair with
\[
G\sim\mathcal N(0,1),
\qquad
\mathbb E[Z^2]=\kappa(t')^2,
\qquad
\mathbb E[GZ]=\gamma(t').
\]

We next compute the leading-order limits of $a_1(\mathbf{x},\boldsymbol{\delta x})$ and $a_0(\mathbf{x},\boldsymbol{\delta x})$.
Write $X=\Delta G$ and $Y=\sigma(G)-\sigma(G+\Delta G)$ as above.
Using the mean-value form of Taylor's theorem, for each realization there exists
$\theta\in(0,1)$ such that
\[
\sigma(G+\Delta G)=\sigma(G)+\sigma'(G)\Delta G+\frac12\,\sigma''(G+\theta\Delta G)\,(\Delta G)^2.
\]
Thus
\begin{equation}
Y
=
-\sigma'(G)\Delta G-\rho,
\qquad
\rho=\frac12\,\sigma''(G+\theta\Delta G)\,(\Delta G)^2.
\label{eq:Y_expand_pf}
\end{equation}
Under the smoothness conditions in Assumption~\ref{ass:activation_moment} and
since $\mathbb E[(\Delta G)^4]=O(\Delta_d^4)=O(\Delta t^4)$, we have
$\mathbb E[\rho^2]=O(\Delta t^4)$ and hence
$\mathbb E|\rho\,\Delta G|=o(\Delta t^2)$.

\noindent\emph{Limit of $a_1$.}
Using (\ref{eq:Y_expand_pf}) and $X=\Delta G$,
\begin{align}
\mathbb E[YX\mid \mathbf{x},\boldsymbol{\delta x}]
&=
\mathbb E\big[Y\Delta G\mid \mathbf{x},\boldsymbol{\delta x}\big]\notag\\
&=
-\mathbb E\big[\sigma'(G)(\Delta G)^2\mid \mathbf{x},\boldsymbol{\delta x}\big]
-\mathbb E\big[\rho\,\Delta G\mid \mathbf{x},\boldsymbol{\delta x}\big]\notag\\
&=
-\Delta t^2\,\mathbb E\big[\sigma'(G)Z^2\mid \mathbf{x},\boldsymbol{\delta x}\big]
+o(\Delta t^2),
\label{eq:YX_limit_pf}
\end{align}
while
\[
\mathbb E[X^2\mid \mathbf{x},\boldsymbol{\delta x}]
=
\mathbb E[(\Delta G)^2\mid \mathbf{x},\boldsymbol{\delta x}]
=
\Delta t^2\,\mathbb E[Z^2\mid \mathbf{x},\boldsymbol{\delta x}]
=
\Delta t^2\,\kappa(t')^2+o(\Delta t^2).
\]
Therefore,
\begin{equation}
a_1(\mathbf{x},\boldsymbol{\delta x})
=
\frac{\mathbb E[YX\mid \mathbf{x},\boldsymbol{\delta x}]}{\mathbb E[X^2\mid \mathbf{x},\boldsymbol{\delta x}]}
=
-\frac{\mathbb E[\sigma'(G)Z^2]}{\kappa(t')^2}+o(1),
\label{eq:a1_reduce_pf}
\end{equation}
where expectations on the right-hand side are taken under the limiting
joint Gaussian law of $(G,Z)$.

Since $(G,Z)$ is jointly Gaussian with $\mathbb E[GZ]=\gamma(t')$ and $\mathbb E[Z^2]=\kappa(t')^2$,
we have
\[
\mathbb E[Z^2\mid G]
=
\big(\mathbb E[Z\mid G]\big)^2+\mathrm{Var}(Z\mid G)
=
\gamma(t')^2\,G^2+\big(\kappa(t')^2-\gamma(t')^2\big).
\]
Hence
\begin{align}
\mathbb E[\sigma'(G)Z^2]
=
\mathbb E\big[\sigma'(G)\,\mathbb E[Z^2\mid G]\big]
=
\gamma(t')^2\,\mathbb E\big[\sigma'(G)G^2\big]
+\big(\kappa(t')^2-\gamma(t')^2\big)\,\mathbb E\big[\sigma'(G)\big],
\label{eq:EZ2_sigma_pf}
\end{align}
with $G\sim\mathcal N(0,1)$.
Combining (\ref{eq:a1_reduce_pf}) and (\ref{eq:EZ2_sigma_pf}) yields
\begin{align}
a_1(t')
=&
-\frac{
\gamma(t')^2\mathbb E_{G} \big[\sigma'(G)\,G^2\big]
+
\big(\kappa(t')^2-\gamma(t')^2\big)
\mathbb E_{G} \big[\sigma'(G)\big]
}{
\kappa(t')^2
}.
\label{eq:a1}
\end{align}


\noindent\emph{Limit of $a_0$.}
Similarly, using (\ref{eq:Y_expand_pf}),
\begin{align}
\mathbb E[Y^2\mid \mathbf{x},\boldsymbol{\delta x}]
&=
\mathbb E\big[\sigma'(G)^2(\Delta G)^2\mid \mathbf{x},\boldsymbol{\delta x}\big]
+2\,\mathbb E\big[\sigma'(G)\Delta G\,\rho\mid \mathbf{x},\boldsymbol{\delta x}\big]
+\mathbb E[\rho^2\mid \mathbf{x},\boldsymbol{\delta x}]\notag\\
&=
\Delta t^2\,\mathbb E\big[\sigma'(G)^2 Z^2\mid \mathbf{x},\boldsymbol{\delta x}\big]
+o(\Delta t^2),
\label{eq:Y2_limit_pf}
\end{align}
where the $o(\Delta t^2)$ term follows from Cauchy--Schwarz together with
$\mathbb E[\rho^2]=O(\Delta t^4)$.
Dividing by $\mathbb E[X^2]=\Delta t^2\kappa(t')^2+o(\Delta t^2)$ gives
\[
\frac{\mathbb E[Y^2]}{\mathbb E[X^2]}
=
\frac{\mathbb E[\sigma'(G)^2 Z^2]}{\kappa(t')^2}+o(1).
\]
Using again $\mathbb E[Z^2\mid G]=\gamma(t')^2G^2+(\kappa(t')^2-\gamma(t')^2)$ yields
\begin{align}
\mathbb E[\sigma'(G)^2 Z^2]
&=
\gamma(t')^2\,\mathbb E\big[\sigma'(G)^2G^2\big]
+\big(\kappa(t')^2-\gamma(t')^2\big)\,\mathbb E\big[\sigma'(G)^2\big],
\label{eq:EZ2_sigmap2_pf}
\end{align}
with $G\sim\mathcal N(0,1)$.
Combining these expressions with the definition
$a_0=\frac{\mathbb E[Y^2]}{\mathbb E[X^2]}-a_1^2$, we have
\begin{align}
a_0(t')
=&
\frac{
\gamma(t')^2\mathbb E_{G} \big[\sigma'(G)^2G^2\big]
+
\big(\kappa(t')^2-\gamma(t')^2\big)
\mathbb E_{G} \big[\sigma'(G)^2\big]
}{
\kappa(t')^2
}
-a_1(t')^2.
\label{eq:a0}
\end{align}
This completes the proof.
\end{proof}

\subsection{Proof of Theorem~\ref{thm:Ucd_structure_pfode_empirical}}
\label{the_proof}
\begin{proof}

Lemma~\ref{lem:orth_decomp_delta_h} gives, for each $(\mathbf{x},\delta \mathbf{x})$,
\[
\mathbb E_{\zeta} \left[\Delta \mathbf{h}\,\Delta \mathbf{h}^\top \mid \mathbf{x},\delta \mathbf{x}\right]
=
a_1(\mathbf{x},\delta \mathbf{x})^2\,
\mathbb E_{\zeta} \left[\Delta \mathbf{g}\,\Delta \mathbf{g}^\top \mid \mathbf{x},\delta \mathbf{x}\right]
+
a_0(\mathbf{x},\delta \mathbf{x})\,
\mathbb E_{\zeta} \left[\Delta g_i^2 \mid \mathbf{x},\delta \mathbf{x}\right]\,
\mathbf{I}_p,
\]
with $\mathbb E_{\zeta}[\Delta g_i^2\mid \mathbf{x},\delta \mathbf{x}]=\|\delta \mathbf{x}\|_2^2/d$.
Taking $\mathbb E_\xi$ and then averaging over $\nu$ yields
\begin{equation}
\mathbf{U}_{\mathrm{cd}}
=
\frac{1}{n}\sum_{\nu=1}^n \mathbb E_\xi \left[
a_1(\mathbf{x},\delta \mathbf{x})^2\,\Delta \mathbf{g}\,\Delta \mathbf{g}^\top
\right]
+
\frac{1}{n}\sum_{\nu=1}^n \mathbb E_\xi \left[
a_0(\mathbf{x},\delta \mathbf{x})\,\frac{\|\delta \mathbf{x}\|_2^2}{d}
\right] \mathbf{I}_p.
\label{eq:Ucd_after_lemma_emp}
\end{equation}

Under the small-noise one-step regime and the concentration hypotheses of
Lemma~\ref{lem:orth_decomp_delta_h}, we may replace
$a_1(\mathbf{x},\delta \mathbf{x})\to a_1(t')$ and
$a_0(\mathbf{x},\delta \mathbf{x})\to a_0(t')$ in (\ref{eq:Ucd_after_lemma_emp})
at leading order, obtaining
\begin{equation}
\mathbf{U}_{\mathrm{cd}}
=
a_1(t')^2\,
\frac{1}{n}\sum_{\nu=1}^n \mathbb E_\xi \left[
\Delta \mathbf{g}\,\Delta \mathbf{g}^\top
\right]
+
\beta(t',\Delta t)\,\mathbf{I}_p
+
o(\Delta t^2),
\label{eq:Ucd_step1_emp}
\end{equation}
where $\beta(t',\Delta t) = \frac{1}{n}\sum_{\nu=1}^n 
\mathbb E_{\xi} \left[\frac{\|\delta \mathbf{x}\|_2^2}{d}\right]$.

Since $\Delta \mathbf{g}=\mathbf{W}\,\delta \mathbf{x}/\sqrt d$, then we have
\[
\Delta \mathbf{g}\,\Delta \mathbf{g}^\top
=
\frac{1}{d}\,\mathbf{W}\,\delta \mathbf{x}\,\delta \mathbf{x}^\top \mathbf{W}^\top,
\quad
\frac{1}{n}\sum_{\nu=1}^n\mathbb E_\xi[\Delta \mathbf{g}\,\Delta \mathbf{g}^\top]
=
\frac{1}{d}\,\mathbf{W}
\Big(
\frac{1}{n}\sum_{\nu=1}^n\mathbb E_\xi[\delta \mathbf{x}\,\delta \mathbf{x}^\top]
\Big)
\mathbf{W}^\top.
\]
Using $\delta \mathbf{x}=-\Delta t(\mathbf{x}+\mathbf{s}_\phi(\mathbf{x},t'))$, we obtain
\begin{equation}
\frac{1}{n}\sum_{\nu=1}^n\mathbb E_\xi[\delta \mathbf{x}\,\delta \mathbf{x}^\top]
=
\Delta t^2\,
\frac{1}{n}\sum_{\nu=1}^n
\mathbb E_\xi \left[(\mathbf{x}+\mathbf{s}_\phi)(\mathbf{x}+\mathbf{s}_\phi)^\top\right].
\label{eq:avg_dx_dxT}
\end{equation}

From (\ref{eq:Aphi_closed_form}) and the deterministic equivalent
(\ref{eq:V_GEP_form}), the converged teacher score can be expressed as
\begin{equation}
\mathbf{s}_\phi(\mathbf{x},t')
=
-\mu_1(t')\,\frac{\mathbf{W}^\top}{\sqrt d}\,\mathbf{U}^{-1}\,\mathbf{h}(\mathbf{x}),
\label{eq:sphi_emp_closed}
\end{equation}
where $\mathbf{U}=(1/n)\sum_{\nu}\mathbb E_\xi[\mathbf{h}(\mathbf{x})\mathbf{h}(\mathbf{x})^\top]$.
Let $\mathbf{B}=\mathbf{W}/\sqrt d$ so that $\mathbf{S}=\mathbf{B}\mathbf{B}^\top$.
Under Gaussian equivalence and $\Sigma=I_d$), the OU marginal of $\mathbf{x}$ is
asymptotically isotropic, and Gaussian integration by parts yields the identity
\begin{equation}
\frac{1}{n}\sum_{\nu=1}^n\mathbb E_\xi \left[\mathbf{x}\,\mathbf{h}(\mathbf{x})^\top\right]
\simeq
\mu_1(t')\,\mathbf{B}^\top,
\label{eq:avg_xhT}
\end{equation}
while by definition of $\mathbf{U}=\frac{1}{n}\sum_{\nu=1}^n\mathbb E_\xi \left[\mathbf{h}(\mathbf{x})\,\mathbf{h}(\mathbf{x})^\top\right]$,
plugging (\ref{eq:sphi_emp_closed}) into the cross and score terms and using
(\ref{eq:avg_xhT}) gives
\begin{align}
\frac{1}{n}\sum_{\nu=1}^n\mathbb E_\xi[\mathbf{x}\mathbf{s}_\phi^\top]
&\simeq
-\mu_1(t')^2\,\mathbf{B}^\top \mathbf{U}^{-1}\mathbf{B},
\quad
\frac{1}{n}\sum_{\nu=1}^n\mathbb E_\xi[\mathbf{s}_\phi \mathbf{x}^\top]
\simeq
-\mu_1(t')^2\,\mathbf{B}^\top \mathbf{U}^{-1}\mathbf{B},
\notag\\
\frac{1}{n}\sum_{\nu=1}^n\mathbb E_\xi[\mathbf{s}_\phi \mathbf{s}_\phi^\top]
&=
\mu_1(t')^2\,\mathbf{B}^\top \mathbf{U}^{-1}
\Big(
\frac{1}{n}\sum_{\nu=1}^n\mathbb E_\xi[\mathbf{h}\mathbf{h}^\top]
\Big)\mathbf{U}^{-1}\mathbf{B}
=
\mu_1(t')^2\,\mathbf{B}^\top \mathbf{U}^{-1}\mathbf{B}.
\label{eq:avg_terms}
\end{align}
Moreover, the isotropic OU marginal implies
\begin{equation}
\frac{1}{n}\sum_{\nu=1}^n\mathbb E_\xi[\mathbf{x}\mathbf{x}^\top]\simeq \mathbf{I}_d.
\label{eq:avg_xxT}
\end{equation}
Combining (\ref{eq:avg_terms}) and (\ref{eq:avg_xxT}) yields:
\begin{equation}
\frac{1}{n}\sum_{\nu=1}^n\mathbb E_\xi \left[(\mathbf{x}+\mathbf{s}_\phi)(\mathbf{x}+\mathbf{s}_\phi)^\top\right]
\simeq
\mathbf{I}_d-\mu_1(t')^2\,\mathbf{B}^\top \mathbf{U}^{-1}\mathbf{B}.
\label{eq:avg_xplusS}
\end{equation}

Substituting (\ref{eq:avg_xplusS}) into (\ref{eq:avg_dx_dxT}) gives
\[
\frac{1}{n}\sum_{\nu=1}^n\mathbb E_\xi[\delta \mathbf{x}\,\delta \mathbf{x}^\top]
\simeq
\Delta t^2\Big(\mathbf{I}_d-\mu_1(t')^2\,\mathbf{B}^\top \mathbf{U}^{-1}\mathbf{B}\Big).
\]
Therefore,
\[
\frac{1}{n}\sum_{\nu=1}^n\mathbb E_\xi[\Delta \mathbf{g}\,\Delta \mathbf{g}^\top]
\simeq
\Delta t^2\,
\frac{1}{d}\,\mathbf{W}\Big(\mathbf{I}_d-\mu_1(t')^2\,\mathbf{B}^\top \mathbf{U}^{-1}\mathbf{B}\Big)\mathbf{W}^\top
=
\Delta t^2\Big(\mathbf{S}-\mu_1(t')^2\,\mathbf{S}\,\mathbf{U}^{-1}\,\mathbf{S}\Big).
\]

Now we begin to analyze the second term in (\ref{eq:Ucd_pfode_structure_emp}).
Using $\delta \mathbf{x} = \Delta t\,t'\,\mathbf{s}_\phi(\mathbf{x},t')$ and
$\mathbf{s}_\phi(\mathbf{x},t')=(1/\sqrt p)\mathbf{A}_\phi \mathbf{h}(\mathbf{x})$, we obtain
\begin{equation}
\frac{\|\delta \mathbf{x}\|_2^2}{d}
=
\Delta t^2 t'^2\,
\frac{1}{d} \frac{1}{p}\,
\mathbf{h}(\mathbf{x})^\top \mathbf{A}_\phi^\top \mathbf{A}_\phi\, \mathbf{h}(\mathbf{x}).
\label{eq:normdx_expand}
\end{equation}
Averaging over $\xi$ and $\nu$ and using
$\mathbf{U}=\frac{1}{n}\sum_{\nu=1}^n \mathbb E_\xi[\mathbf{h}(\mathbf{x})\mathbf{h}(\mathbf{x})^\top]$ yields
\begin{align}
\frac{1}{n}\sum_{\nu=1}^n 
\mathbb E_{\xi} \left[\frac{\|\delta \mathbf{x}\|_2^2}{d}\right]
&=
\frac{1}{n}\sum_{\nu=1}^n 
\mathbb E_{\xi} \left[
\frac{1}{d}
\left\|
\Delta t\,t'\,\frac{1}{\sqrt p}\mathbf{A}_\phi \mathbf{h}(\mathbf{x})
\right\|_2^2
\right] \notag\\
&=
\Delta t^2 t'^2\,
\frac{1}{dp}\,
\frac{1}{n}\sum_{\nu=1}^n 
\mathbb E_{\xi} \left[
\mathbf{h}(\mathbf{x})^\top \mathbf{A}_\phi^\top \mathbf{A}_\phi\, \mathbf{h}(\mathbf{x})
\right].
\label{eq:avg_normdx_step1}
\end{align}
Using the standard quadratic-form identity
$
\mathbb E \left[\mathbf{h}^\top \mathbf{M} \mathbf{h}\right]
=
\mathrm{Tr} \left(\mathbf{M}\,\mathbb E[\mathbf{h}\mathbf{h}^\top]\right),\mathbf{M}\in\mathbb R^{p\times p}$,
we obtain
\begin{align}
\frac{1}{n}\sum_{\nu=1}^n 
\mathbb E_{\xi} \left[
\mathbf{h}(\mathbf{x})^\top \mathbf{A}_\phi^\top \mathbf{A}_\phi\, \mathbf{h}(\mathbf{x})
\right]
&=
\mathrm{Tr} \left(
\mathbf{A}_\phi^\top \mathbf{A}_\phi \cdot
\frac{1}{n}\sum_{\nu=1}^n 
\mathbb E_{\xi} \left[\mathbf{h}(\mathbf{x})\mathbf{h}(\mathbf{x})^\top\right]
\right)
= \mathrm{Tr} \left(\mathbf{A}_\phi^\top \mathbf{A}_\phi\,\mathbf{U}\right).
\label{eq:avg_quad_trace}
\end{align}
Therefore, we finally obtain
\begin{equation}
\frac{1}{n}\sum_{\nu=1}^n 
\mathbb E_{\xi} \left[\frac{\|\delta \mathbf{x}\|_2^2}{d}\right]
=
\Delta t^2 t'^2\,
\frac{1}{dp}\,
\mathrm{Tr} \left(\mathbf{A}_\phi^\top \mathbf{A}_\phi\,\mathbf{U}\right).
\label{eq:avg_normdx_trace_final}
\end{equation}
Invoking Lemma~\ref{lem:teacher_top_layer},
$\mathbf{A}_\phi=-(\sqrt p/\sqrt{\Delta_{t'}})\,\mathbf{V}^\top \mathbf{U}^{-1}$, we have
$
\mathbf{A}_\phi^\top \mathbf{A}_\phi
=
\frac{p}{\Delta_{t'}}\,
\mathbf{U}^{-1} \mathbf{V} \mathbf{V}^\top \mathbf{U}^{-1}$.
Using the Gaussian-equivalent form of $\mathbf{V}$ from Lemma~\ref{lem:teacher_top_layer}, we get
\begin{equation}
\mathbf{V} \mathbf{V}^\top
=
\left(\mu_1(t')\frac{\sqrt{\Delta_{t'}}}{\Gamma_{t'}}\right)^{ 2}
\frac{\mathbf{W}\mathbf{W}^\top}{d}
=
\left(\mu_1(t')\frac{\sqrt{\Delta_{t'}}}{\Gamma_{t'}}\right)^{ 2}
\mathbf{S},
\label{eq:VVt_to_S}
\end{equation}
and therefore
\begin{equation}
\frac{1}{p}\,\mathrm{Tr}(\mathbf{A}_\phi^\top \mathbf{A}_\phi\,\mathbf{U})
=
\frac{1}{\Delta_{t'}}\,
\mathrm{Tr} \left(\mathbf{U}^{-1} \mathbf{V} \mathbf{V}^\top\right)
=
\left(\frac{\mu_1(t')}{\Gamma_{t'}}\right)^{ 2}
\mathrm{Tr} \left(\mathbf{U}^{-1}\mathbf{S}\right).
\label{eq:trace_AphiTAphiU_closed}
\end{equation}
Combining (\ref{eq:normdx_expand})--(\ref{eq:trace_AphiTAphiU_closed}) gives
\begin{equation}
\begin{aligned}
\beta(t',\Delta t)
&=
a_0(t')\,
\Delta t^2 t'^2\,
\left(\frac{\mu_1(t')}{\Gamma_{t'}}\right)^{ 2}
\frac{1}{d}\,\mathrm{Tr} \left(\mathbf{U}^{-1}\mathbf{S}\right)\\
&=a_0(t')\,a_1(t')^2\,
\Delta t^2 t'^2\,
\frac{1}{d}\,\mathrm{Tr} \left(\mathbf{U}^{-1}\mathbf{S}\right).
\label{eq:beta_final}
\end{aligned}
\end{equation}
Putting together the non-isotropic and isotropic contributions,
we finally obtain
\begin{equation}
\mathbf{U}_{\mathrm{cd}}
=
\Delta t^2\,a_1(t')^2
\Big(
\mathbf{S}
-\mu_1(t')^2\,\mathbf{S}\,\mathbf{U}^{-1}\,\mathbf{S}
\Big)
\;+\;
\beta(t',\Delta t)\,\mathbf{I}_p,
\label{eq:Ucd_pfode_structure_emp}
\end{equation}
This completes the proof.

\end{proof}

\section{Additional Results}
\label{Additional_Results}

\subsection{Disentangling Model and Sampler Effects}
\label{Sampler_Effects}

To determine whether the observed reduction in memorization should be attributed to the distilled model itself or to the sampling procedure, we consider two complementary comparisons. The first is reported in the main text: Table~\ref{tab:cifar10_mem_fid} compares the 1-step student with deterministic EDM sampling, which is the evaluation setting most directly aligned with our theoretical analysis of the one-step consistency objective. The results there already show substantial memorization suppression in the 1-step setting, indicating that the effect is not a consequence of multi-step inference.

We further examine this question under matched stochastic sampling. Specifically, we compare two-step consistency models with EDM teachers under stochastic sampling and otherwise identical settings.
Because both models are evaluated with stochastic inference, the resulting memorization gap cannot be explained by differences between ODE and stochastic sampling. As shown in Table~\ref{tab:stochastic_sampling_comparison}, the two-step consistency model consistently exhibits substantially lower memorization than the stochastic EDM teacher under both $l_2$ and SSCD metrics, while also achieving better FID across all settings. This comparison therefore isolates the contribution of the distilled model and provides further evidence that the observed memorization reduction is not induced by the sampler.

\begin{table}[t]
\centering
\caption{\textbf{Comparison between stochastic EDM teachers and two-step consistency models under matched stochastic sampling settings on CIFAR-10}. Since both methods use stochastic sampling, the memorization gap isolates the model effect rather than the sampler effect.}
\label{tab:stochastic_sampling_comparison}
\small
\begin{tabular}{ccccc}
\toprule
Dataset size & Sampler/Model & FID & $l_2$ Mem & SSCD Mem / p95 \\
\midrule
3000 & Teacher stochastic & 16.76 & 14.60\% & 39.08\% / 0.8868 \\
3000 & Student 2-step & 12.95 & 3.21\% & 39.55\% / 0.7792 \\
\midrule
4000 & Teacher stochastic & 18.44 & 16.25\% & 39.92\% / 0.8973 \\
4000 & Student 2-step & 16.57 & 1.76\% & 27.07\% / 0.7579 \\
\midrule
5000 & Teacher stochastic & 21.81 & 12.50\% & 32.24\% / 0.8796 \\
5000 & Student 2-step & 19.52 & 0.60\% & 15.10\% / 0.7064 \\
\midrule
6000 & Teacher stochastic & 22.97 & 13.26\% & 33.12\% / 0.8901 \\
6000 & Student 2-step & 21.23 & 0.48\% & 11.11\% / 0.6783 \\
\bottomrule
\end{tabular}
\end{table}

\subsection{Effect of Model Capacity on Memorization and Distillation}
\label{sec:model_capacity}

We further study how the model capacity affects memorization behavior and how effectively consistency distillation mitigates such effects.
All results in this subsection are obtained using the \emph{two-step} consistency distillation objective, as it yields better generation quality while exhibiting the same qualitative trends as the one-step setting.
All experiments are conducted on a randomly sampled subset of $6000$ training images and trained under an identical optimization schedule, while the total number of training steps varies across configurations according to their respective training setups.
For each architectural configuration, the teacher model used for distillation is selected from the terminal checkpoint of the corresponding training trajectory.

We first vary the model capacity by changing the \emph{base channel width} of the U-Net backbone while keeping the depth fixed.
When the base channel width is reduced from $128$ to $64$, the teacher model exhibits negligible memorization throughout training.
At the terminal training step ($400$k steps), the teacher achieves FID = $24.8$ with a memorization ratio below $0.1\%$.
Applying two-step consistency distillation to this teacher yields a student model with comparable sample quality (FID = $24.58$) and a memorization ratio statistically indistinguishable from zero.

\begin{figure*}[t]
  \centering
  \begin{subfigure}[t]{0.32\textwidth}
    \centering
    \includegraphics[width=\linewidth]{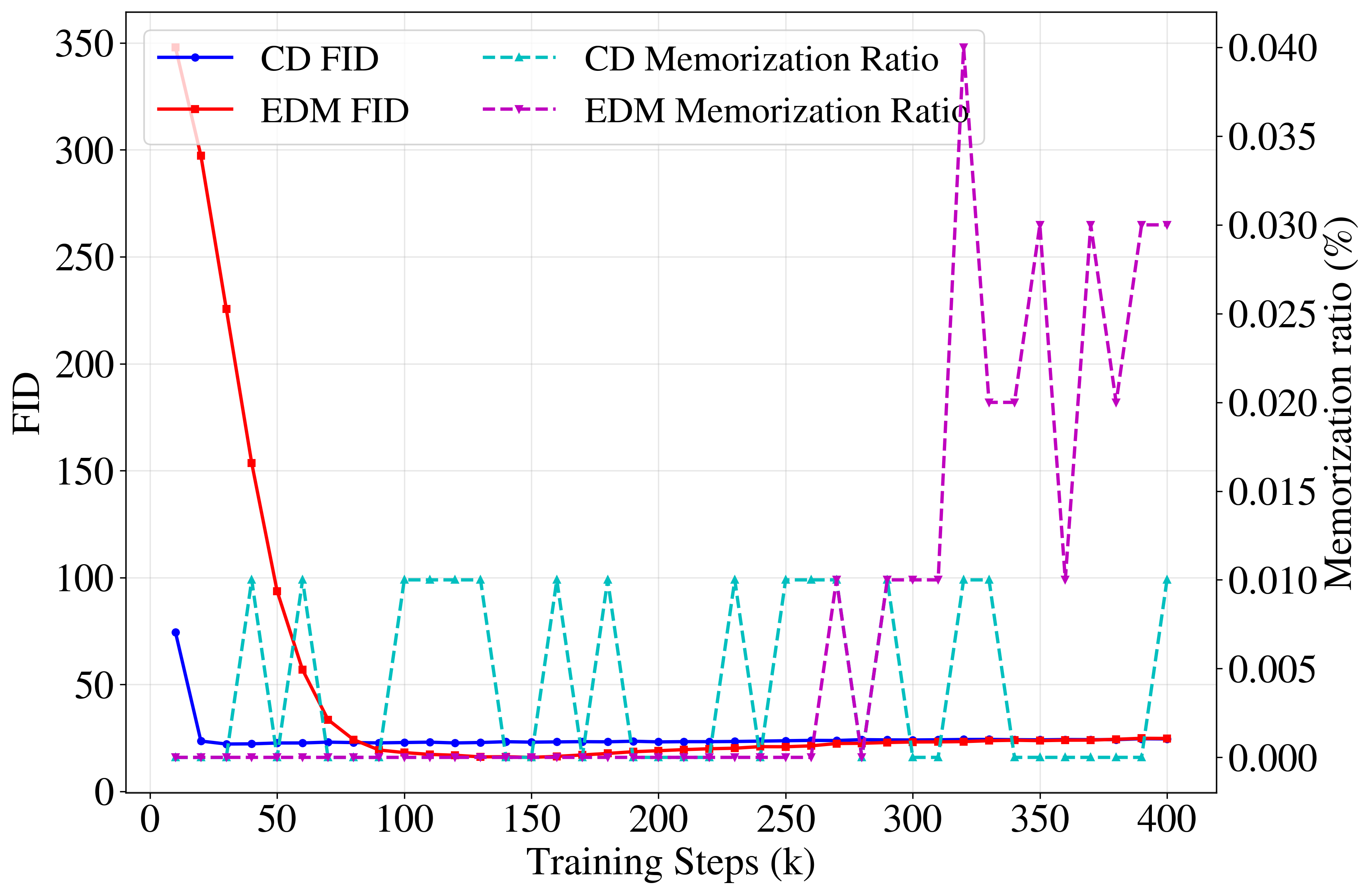}
    \caption{Base channel width decreased from $128$ to $64$.}
    \label{fig:capacity_channel64}
  \end{subfigure}
  \hfill
  \begin{subfigure}[t]{0.32\textwidth}
    \centering
    \includegraphics[width=\linewidth]{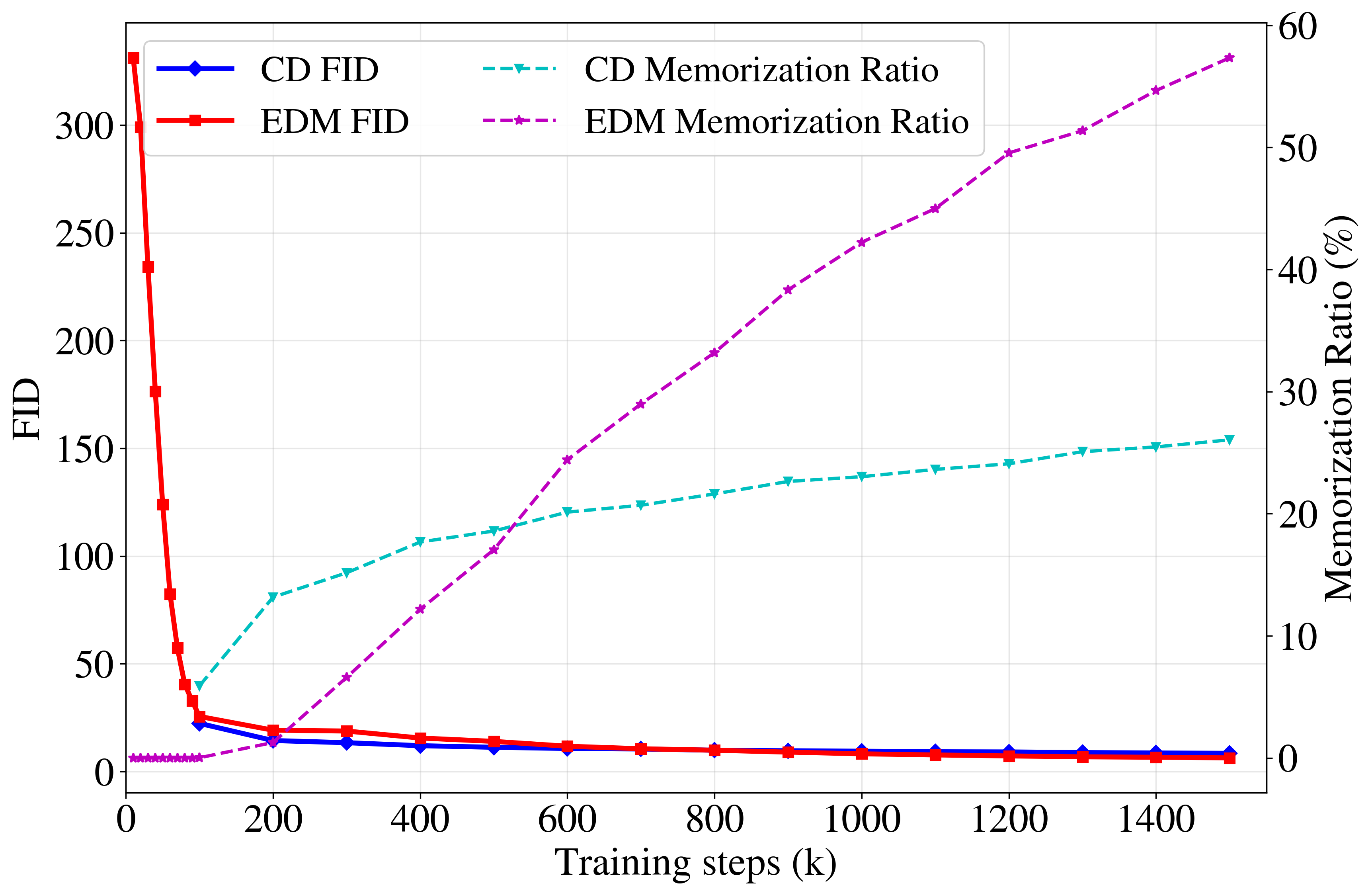}
    \caption{Base channel width increased from $128$ to $192$.}
    \label{fig:capacity_channel192}
  \end{subfigure}
  \hfill
  \begin{subfigure}[t]{0.32\textwidth}
    \centering
    \includegraphics[width=\linewidth]{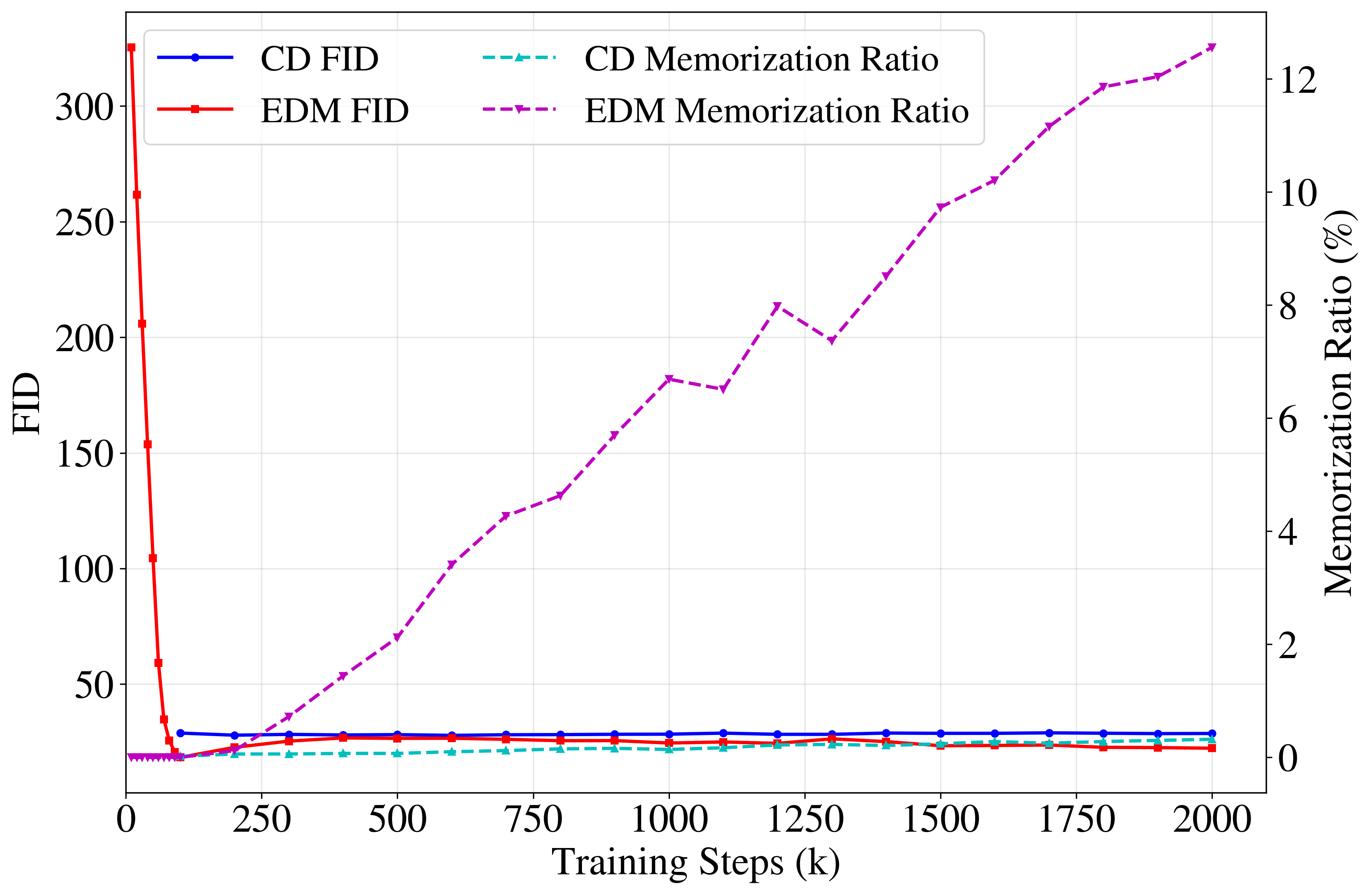}
    \caption{Reduced depth from $4$ residual blocks to $2$.}
    \label{fig:capacity_resblock2}
  \end{subfigure}

  \caption{
  \textbf{Effect of teacher model capacity on memorization and consistency distillation.}
  Each panel reports the evolution of FID (left axis) and memorization ratio (right axis) as a function of training steps for the teacher diffusion model (EDM) and the corresponding consistency-distilled (CD) student.
  Model capacity is varied along three axes: network width (channels) and depth (number of ResNet blocks).
  In all cases, the teacher used for distillation is selected from the terminal training checkpoint of the corresponding run.
  Reducing model capacity—either by decreasing width or depth—suppresses memorization throughout training, and the CD student distilled from such teachers exhibits near-zero memorization.
  Conversely, increasing model capacity leads to earlier and stronger memorization in the teacher, while consistency distillation consistently yields students with substantially reduced memorization at comparable FID.
  }\vspace{-4mm}
  \label{fig:capacity_ablation}
\end{figure*}

In contrast, increasing the base channel width to $192$ substantially alters the training dynamics.
In this high-capacity regime, the teacher model exhibits early onset and steady growth of memorization, reaching a memorization ratio of approximately $57.34\%$ at the terminal checkpoint ($1500$k steps), while achieving FID = $6.34$.
Despite this pronounced memorization in the teacher, the two-step consistency-distilled student reduces the memorization ratio to $26.05\%$, while preserving comparable sample quality with FID = $8.45$.

We further examine architectural depth by reducing the \emph{number of residual blocks per resolution level} from $4$ to $2$, while keeping the base channel width fixed with $128$.
This shallower teacher model exhibits weak memorization, with the terminal memorization ratio of approximately $12.56\%$ and FID = $ 22.31$ ($2000$k steps).
The corresponding two-step consistency-distilled student closely matches this behavior, achieving FID = $28.67$ with a memorization ratio again near zero.
Overall, two-step consistency distillation behaves robustly across model capacities: it preserves non-memorizing behavior when the teacher does not memorize, and substantially suppresses memorization when increased capacity induces it, while maintaining competitive sample quality.

\begin{figure*}[t]
  \centering
  \includegraphics[width=0.9\textwidth]{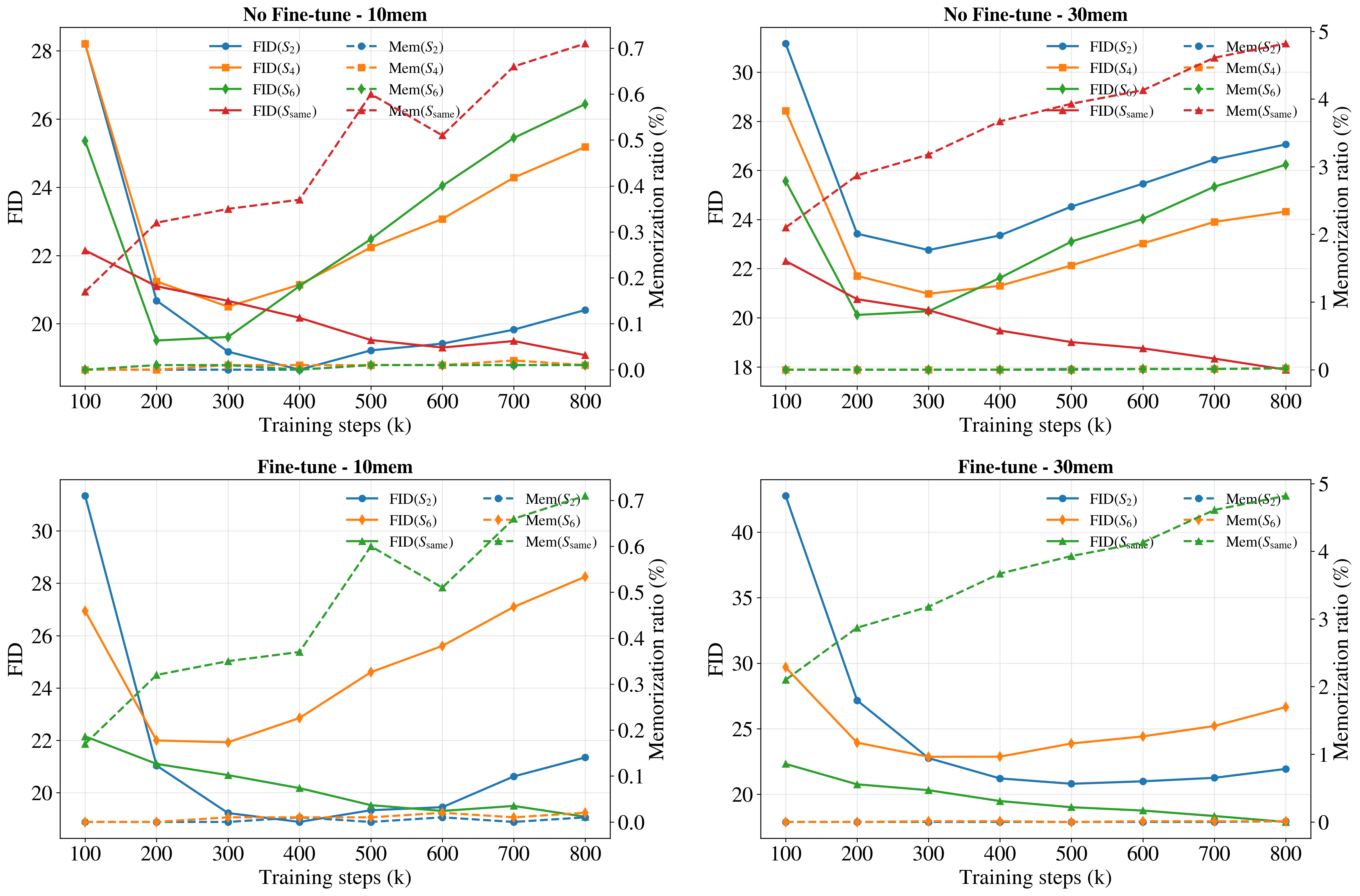}
  \caption{
  \textbf{Effect of student--teacher architectural mismatch on memorization and sample quality under two-step consistency distillation.}
  The figure reports FID (left y-axis, solid lines) and memorization ratio (right y-axis, dashed lines) as functions of training steps for different student--teacher architectural configurations.
  Results are shown for both $10\%$-memorization and $30\%$-memorization teachers.
  Across all settings, the student memorization ratio remains close to zero regardless of initialization strategy or architectural mismatch.
  However, when the student is randomly initialized (No Fine-tune), FID degrades at later training stages, indicating reduced optimization stability.
  In contrast, inheriting shared teacher parameters substantially stabilizes training and mitigates late-stage FID degradation, even when student and teacher architectures differ.}\vspace{-4mm}
  \label{fig:arch_mismatch_fid_mem}
\end{figure*}

\subsection{Effect of student--teacher architectural mismatch and initialization}
\label{sec:architecture_mismatch}

Fig.~\ref{fig:arch_mismatch_fid_mem} examines the effect of architectural mismatch and initialization strategy on consistency distillation under teachers with approximately $10\%$ and $30\%$ memorization, measured by the $l_2$ metric.
All experiments still use two-step consistency distillation on a randomly sampled subset of $5000$ training images.
We vary the depth of the student network while keeping the teacher architecture fixed, and compare two initialization strategies:
(i) \emph{no fine-tuning}, where the student is randomly initialized, and
(ii) \emph{fine-tuning}, where the student inherits all compatible parameters from the teacher, with unmatched parameters initialized randomly.
We denote by $\mathrm{S2}$, $\mathrm{S4}$, and $\mathrm{S6}$ students whose network depth is defined by $2$, $4$, and $6$ residual blocks per stage, respectively.
$\mathrm{S_{same}}$ corresponds to the setting where the student architecture exactly matches that of the teacher, whose residual blocks per stage is fixed with $4$.
All experiments use the same two-step consistency objective and identical training protocols.

A key observation is that the student memorization behavior depends strongly on \emph{architectural match}.
When the student exactly matches the teacher architecture ($\mathrm{S_{same}}$), its memorization ratio increases steadily over training, although it remains substantially lower than that of the teacher.
In contrast, when the student is either randomly initialized or architecturally mismatched (e.g., $\mathrm{S2/S4/S6}$ distilled from a fixed teacher), its memorization ratio stays near zero throughout.
This separation suggests that consistency distillation does not universally ``erase'' teacher behaviors; rather, it transfers what the student can faithfully realize under its parameterization.
When the student has sufficient expressivity and a well-aligned parameterization (the $\mathrm{S_{same}}$ setting), it can reproduce a larger portion of the teacher mapping---including a small but non-negligible fraction of teacher-specific, memorization-associated behavior.
When the student is mismatched or starts far from the teacher solution, the distillation signal becomes harder to realize precisely, and the learned solution is biased toward more conservative, distribution-level smoothing, which suppresses instance-level memorization.

The FID trends are consistent with this interpretation.
Only the architecture-matched student ($\mathrm{S_{same}}$) exhibits monotonic improvement in FID over training, indicating that the consistency objective can be optimized in a stable manner when the student can closely approximate the teacher-induced consistency function.
By contrast, for randomly initialized or mismatched students, FID improves early but degrades at later stages.
This late-stage degradation occurs despite near-zero memorization, and is indicative of an optimization bias: when exact teacher-matching is unattainable under the student parameterization, continued minimization of the consistency loss increasingly favors overly smooth and low-diversity solutions, which harms sample quality while keeping memorization low.

Consequently, these results highlight a tradeoff controlled by architectural compatibility.
Exact architectural matching enables stable quality improvement but allows partial transfer of teacher memorization, whereas mismatch or random initialization strongly suppresses memorization but can suffer from late-stage quality degradation.
A more detailed theoretical characterization of how architectural realizability and optimization dynamics jointly shape memorization transfer under consistency distillation is left for future work.


\begin{figure}[t]
  \centering
  \begin{subfigure}[t]{0.323\textwidth}
    \centering
    \includegraphics[width=\linewidth]{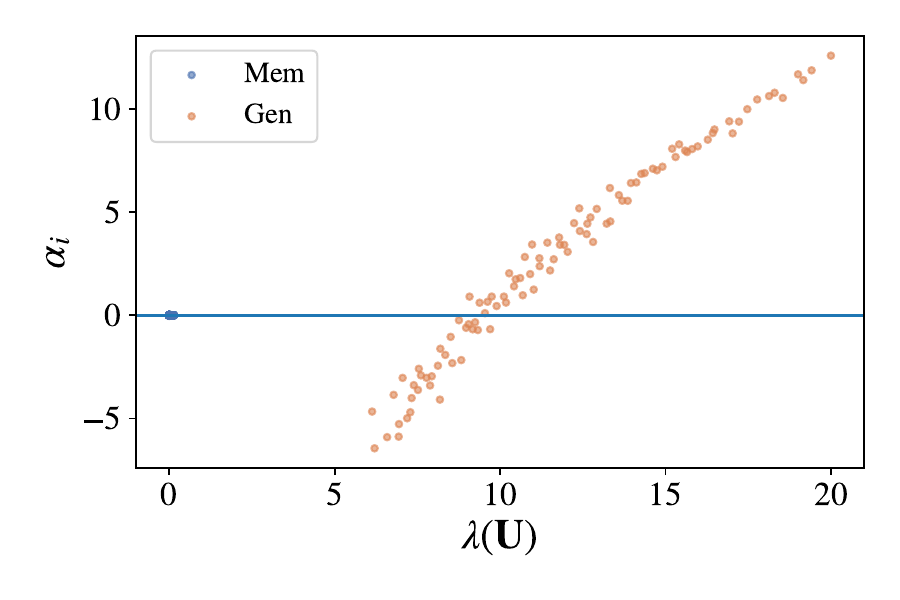}
    \caption{$\psi_p=32,\ \psi_n=16$.}
    \label{fig:alpha_vs_lambdaU_3216}
  \end{subfigure}
  \hfill
  \begin{subfigure}[t]{0.323\textwidth}
    \centering
    \includegraphics[width=\linewidth]{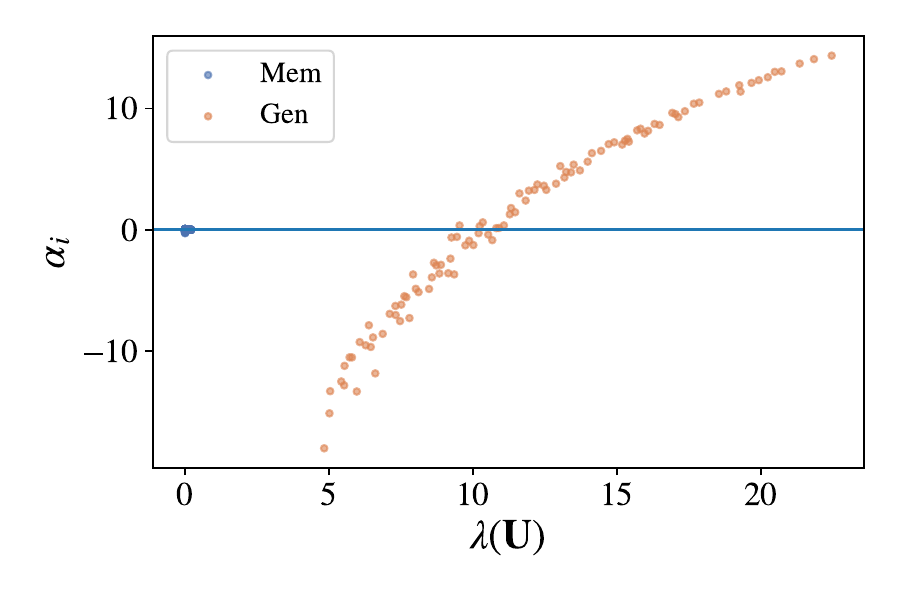}
    \caption{$\psi_p=32,\ \psi_n=8$.}
    \label{fig:alpha_vs_lambdaU_328}
  \end{subfigure}
  \hfill
  \begin{subfigure}[t]{0.323\textwidth}
    \centering
    \includegraphics[width=\linewidth]{figures/chapter5/figC_alpha_i_vs_lamU.pdf}
    \caption{$\psi_p=32,\ \psi_n=4$.}
    \label{fig:alpha_vs_lambdaU_324}
  \end{subfigure}
  \caption{
\textbf{Effect of the $\psi_p/\psi_n$ ratio on mode-wise non-isotropic CD response.}
Each point corresponds to a teacher eigenmode $u_i$, plotted against its curvature
eigenvalue $\lambda_i(\mathbf{U})$, with modes partitioned into MEM and GEN by a
fixed spectral threshold.
Across panels, the feature dimension is fixed at $\psi_p=32$, while the effective
sample ratio $\psi_n$ decreases from left to right.
As $\psi_p/\psi_n$ decreases, the fraction of GEN modes with positive signed
response $\alpha_i>0$ increases, indicating that a broader portion of the
generalization-associated spectrum receives positive non-isotropic updates,
while MEM modes remain tightly concentrated near $\alpha_i\simeq 0$.
}\vspace{-4mm}
\label{fig:alpha_vs_lambdaU_psisweep}
\end{figure}

\begin{figure}[t]
  \centering
  \begin{subfigure}[t]{0.323\textwidth}
    \centering
    \includegraphics[width=\linewidth]{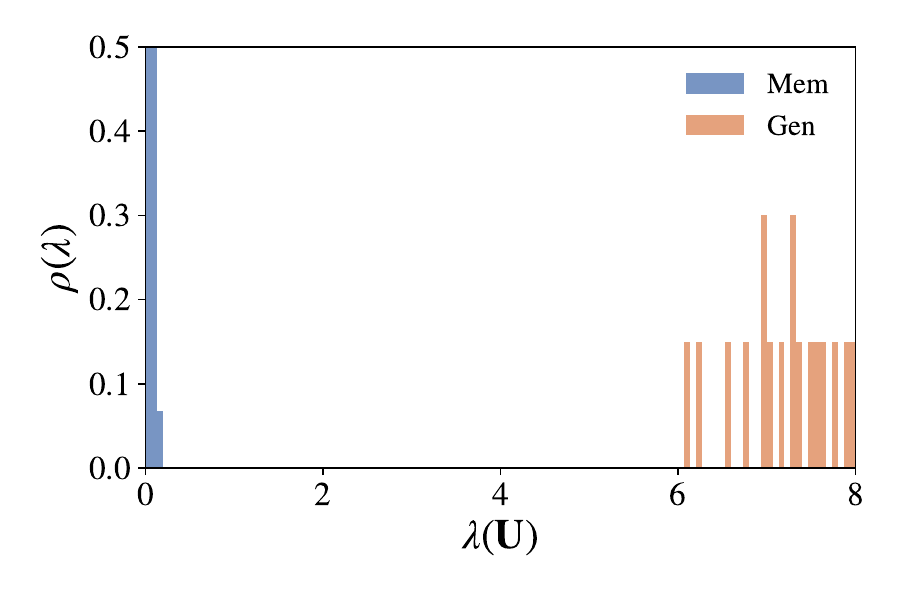}
    \caption{$\psi_p=32,\ \psi_n=16$.}
    \label{fig:U_spectrum_3216}
  \end{subfigure}
  \hfill
  \begin{subfigure}[t]{0.323\textwidth}
    \centering
    \includegraphics[width=\linewidth]{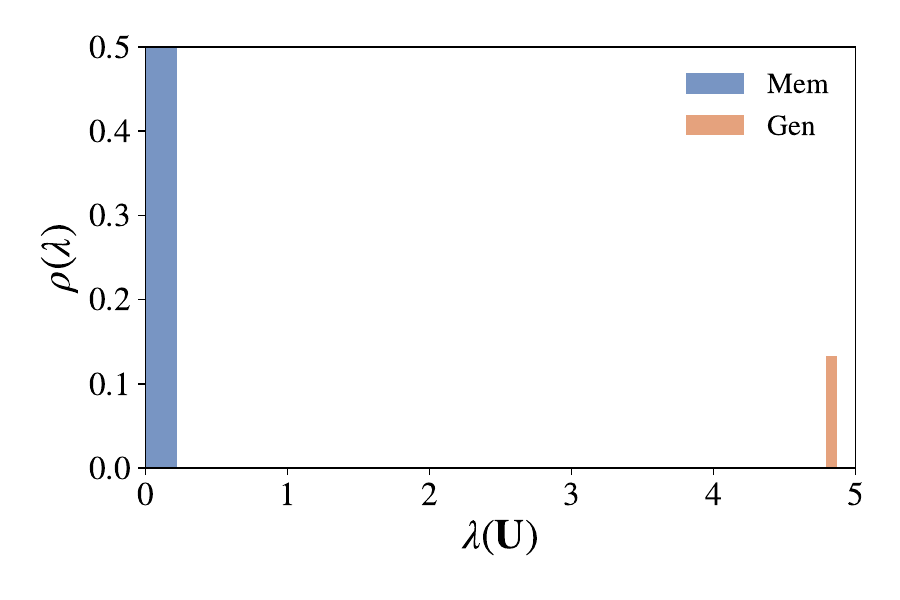}
    \caption{$\psi_p=32,\ \psi_n=8$.}
    \label{fig:U_spectrum_328}
  \end{subfigure}
  \hfill
  \begin{subfigure}[t]{0.323\textwidth}
    \centering
    \includegraphics[width=\linewidth]{figures/chapter5/figU_lambda_linear_hist.pdf}
    \caption{$\mathbf{U}$ $\psi_p=32,\ \psi_n=4$.}
    \label{fig:U_spectrum_324}
  \end{subfigure}
  \caption{
\textbf{Spectral separation of the teacher curvature matrix $\mathbf{U}$ under
different $\psi_p/\psi_n$ ratios.}
Shown are the empirical spectral densities $\rho(\lambda)$ of $\mathbf{U}$,
with modes partitioned into memorization-associated (MEM) and
generalization-associated (GEN) subspaces by a fixed threshold.
For $\psi_p=32$ and $\psi_n=16$ (left), the GEN spectrum is well separated and
concentrated at significantly larger eigenvalues than the MEM spectrum,
yielding a clear spectral gap.
As $\psi_n$ decreases (middle to right), the GEN spectrum shifts leftward and
becomes less separated from MEM modes.
This progressive loss of spectral separation provides a geometric explanation
for the change in the sign distribution of $\alpha_i$: configurations with
stronger MEM/GEN separation admit a larger fraction of GEN modes with
$\alpha_i>0$, facilitating more effective non-isotropic transfer.
}\vspace{-4mm}
\label{fig:U_spectrum_psisweep}
\end{figure}

\subsection{Why Severely Memorizing Teachers Degrade Student Generation Quality}
\label{sec:severe_mem}

As observed in Section~\ref{subsec:moderate_mem}, distillation from severely memorizing teachers can lead to noticeably degraded student FID. In this subsection, we provide a theoretical explanation for this phenomenon through the lens of our mode-wise analysis.

As observed in Section~\ref{subsec:moderate_mem}, distillation from severely memorizing teachers can lead to noticeably degraded student FID. In this subsection, we provide a theoretical explanation for this phenomenon through the lens of our mode-wise analysis. 
In particular, we use the data--feature scaling ratio $\psi_p/\psi_n$ to characterize the memorization regime underlying this behavior.
To understand how the ratio $\psi_p/\psi_n$ influences the
mode-wise behavior of non-isotropic consistency distillation, we examine how the
signed response $\alpha_i$ distributes across teacher curvature modes under
different $\psi_p/\psi_n$ configurations.
Fig.~\ref{fig:alpha_vs_lambdaU_psisweep} shows that, for fixed $\psi_p$, decreasing
$\psi_n$ (and hence increasing the ratio $\psi_p/\psi_n$) systematically enlarges
the subset of generalization-associated (Gen) modes with positive signed response
$\alpha_i>0$.
While Gen modes at very small curvature eigenvalues may remain suppressive, a
progressively larger portion of the Gen spectrum receives positive non-isotropic
updates as $\psi_n$ decreases.
Across all configurations, memorization-associated (Mem) modes remain tightly
concentrated near $\alpha_i \approx 0$, indicating that positive transfer induced by
the non-isotropic CD term is selectively allocated within the Gen subspace rather
than leaking into memorization-dominated directions.

Fig.~\ref{fig:U_spectrum_psisweep} further shows that this behavior is closely tied
to the spectral geometry of the teacher curvature matrix $\mathbf{U}$.
When $\psi_p=32$ and $\psi_n=16$, Gen modes occupy a well-separated, high-eigenvalue
region, while Mem modes remain concentrated near the origin, yielding a clear
spectral gap.
In this regime, the resolvent term $\mathbf{S}\mathbf{U}^{-1}\mathbf{S}$ primarily
suppresses low-eigenvalue directions, so that most Gen modes satisfy
$\alpha_i>0$.
As $\psi_n$ decreases, the Gen spectrum shifts toward smaller eigenvalues and the
spectral separation from Mem modes weakens, increasing the fraction of Gen modes
that fall into the suppressive regime.
This progressive loss of spectral separation directly explains the change in the
sign distribution of $\alpha_i$, linking the mode-wise effect of non-isotropic
consistency distillation to the underlying curvature spectrum of $\mathbf{U}$.

\begin{table}[t]
\centering
\caption{\textbf{Effect of the discretization granularity $N$ in consistency distillation}.}
\label{tab:discretization_effect}
\small
\begin{tabular}{lccc}
\toprule
Setting & FID & $l_2$ Mem & SSCD Mem / p95 \\
\midrule
Teacher & 22.68 & 10\% & 26.93\% / 0.8586 \\
Student $N=12$ & 29.14 & 0.18\% & 4.88\% / 0.5974 \\
Student $N=18$ & 23.76 & 0.10\% & 4.36\% / 0.5892 \\
Student $N=36$ & 17.62 & 0.02\% & 2.48\% / 0.5544 \\
\bottomrule
\end{tabular}\vspace{-2mm}
\end{table}

\subsection{Effect of Discretization Granularity}
\label{subsec:discretization_granularity}

To clarify whether the observed memorization reduction in consistency distillation is merely an artifact of coarse temporal discretization, we study the effect of the discretization granularity $N$ used during distillation. This analysis is important for distinguishing an intrinsic filtering effect of the consistency objective from a purely temporal effect caused by insufficient resolution of the teacher trajectory.
As shown in Table~\ref{tab:discretization_effect}, increasing $N$ consistently improves the student performance while further reducing memorization. This observation suggests that the reduction of memorization is not simply due to a coarse discretization failing to capture the highly curved late-stage dynamics of the teacher. Instead, finer discretization makes the local consistency constraints more faithful to the underlying trajectory, yet does not restore the teacher's memorized behavior. Empirically, better trajectory resolution leads to both improved sample quality and lower memorization, which supports the view that the filtering effect arises from the consistency distillation objective itself rather than from an accidental temporal bias induced by an overly coarse schedule.







\end{document}